
\documentclass{article}

\usepackage{microtype}
\usepackage{graphicx}
\usepackage{subcaption}
\usepackage{booktabs} 

\usepackage{hyperref}



\usepackage[accepted]{icml2025}

\usepackage{amsmath}
\usepackage{amssymb}
\usepackage{mathtools}
\usepackage{amsthm}

\usepackage[capitalize,noabbrev]{cleveref}

\usepackage{stfloats}
\usepackage{placeins}

\theoremstyle{plain}
\newtheorem{theorem}{Theorem}[section]

\theoremstyle{definition}
\newtheorem{definition}[theorem]{Definition}

\theoremstyle{remark}

\usepackage[textsize=tiny]{todonotes}

\icmltitlerunning{Learning from Suboptimal Data in Continuous Control via Auto-Regressive Soft Q-Network}

\begin{document}

\twocolumn[
\icmltitle{Learning from Suboptimal Data in Continuous Control via \\Auto-Regressive Soft Q-Network}





\begin{icmlauthorlist}
\icmlauthor{Jijia Liu}{yyy}
\icmlauthor{Feng Gao}{yyy}
\icmlauthor{Qingmin Liao}{yyy}
\icmlauthor{Chao Yu}{yyy,zzz}
\icmlauthor{Yu Wang}{yyy}
\end{icmlauthorlist}

\icmlaffiliation{yyy}{Tsinghua University, Beijing, China}
\icmlaffiliation{zzz}{Beijing Zhongguancun Academy, Beijing, China}

\icmlcorrespondingauthor{Chao Yu}{zoeyuchao@gmail.com}
\icmlcorrespondingauthor{Yu Wang}{yu-wang@tsinghua.edu.cn}

\icmlkeywords{Value-based Reinforcement Learning, Continuous Control, Auto-Regressive, Suboptimal Demonstration}

\vskip 0.3in
]



\printAffiliationsAndNotice{}  

\newcommand{\arsq}{\textit{Auto-Regressive Soft Q-learning }}

\newcommand\blfootnote[1]{%
  \begingroup
  \renewcommand\thefootnote{}\footnote{#1}%
  \addtocounter{footnote}{-1}%
  \endgroup
}

\begin{abstract}

Reinforcement learning (RL) for continuous control often requires large amounts of online interaction data. 
Value-based RL methods can mitigate this burden by offering relatively high sample efficiency. 
Some studies further enhance sample efficiency by incorporating offline demonstration data to ``kick-start'' training, achieving promising results in continuous control. 
However, they typically compute the Q-function independently for each action dimension, neglecting interdependencies and making it harder to identify optimal actions when learning from suboptimal data, such as non-expert demonstration and online-collected data during the training process.
To address these issues, we propose \arsq (ARSQ), a value-based RL algorithm that models Q-values in a coarse-to-fine, auto-regressive manner.
First, ARSQ decomposes the continuous action space into discrete spaces in a coarse-to-fine hierarchy, enhancing sample efficiency for fine-grained continuous control tasks. 
Next, it auto-regressively predicts dimensional action advantages within each decision step, enabling more effective decision-making in continuous control tasks. 
We evaluate ARSQ on two continuous control benchmarks, RLBench and D4RL, integrating demonstration data into online training. On D4RL, which includes non-expert demonstrations, ARSQ achieves an average $1.62\times$ performance improvement over SOTA value-based baseline. On RLBench, which incorporates expert demonstrations, ARSQ surpasses various baselines, demonstrating its effectiveness in learning from suboptimal online-collected data.

\end{abstract}


\section{Introduction}
\label{sec:intro}

Deep reinforcement learning (RL) has demonstrated remarkable performance across various continuous control domains \cite{SAC, PPO}. However, these breakthroughs often come at the cost of extensive online interactions, which are required for effective convergence \cite{DotaFive, NatureDQN}. This reliance on large-scale exploration poses a major challenge in real-world applications, where data collection can be expensive, time-consuming, or even risky. To alleviate this burden, value-based RL methods, which directly approximate the Q-function rather than parameterizing a policy, have gained popularity due to their improved sample efficiency \cite{GQN, HGQN, DecQN} and have shown advances in continuous control tasks by discretizing each of the dimensions of continuous action spaces \cite{CQN}. Moreover, some studies integrate offline demonstration data into training to further accelerate early learning, reducing the dependence on purely online exploration \cite{RLPD}. In this paper, we adopt this training paradigm to address continuous control using value-based RL, incorporating offline data into the online training process.
\blfootnote{Project page is \url{https://sites.google.com/view/ar-soft-q}.}

For value-based RL, the discretization scheme results in an exponentially large discrete action space, making RL training and exploration challenging. To mitigate this, existing value-based methods often estimate the Q-value for each action dimension independently~\cite{SDQN, DecQN}. However, this simplification comes with a limitation—it neglects interdependencies between action dimensions, potentially leading to suboptimal decision-making. When training data exhibits multiple modes, such as a mix of optimal and suboptimal demonstrations, independently estimating Q-values can bias action selection toward the most frequent behaviors rather than the truly optimal ones.
This limitation is particularly pronounced in the early stages of learning, when the agent relies heavily on imperfect offline data and lacks sufficient online refinement.

\begin{figure*}[ht]
    \centering
    \begin{subfigure}[t]{0.30\textwidth}
        \centering
        \includegraphics[width=\textwidth]{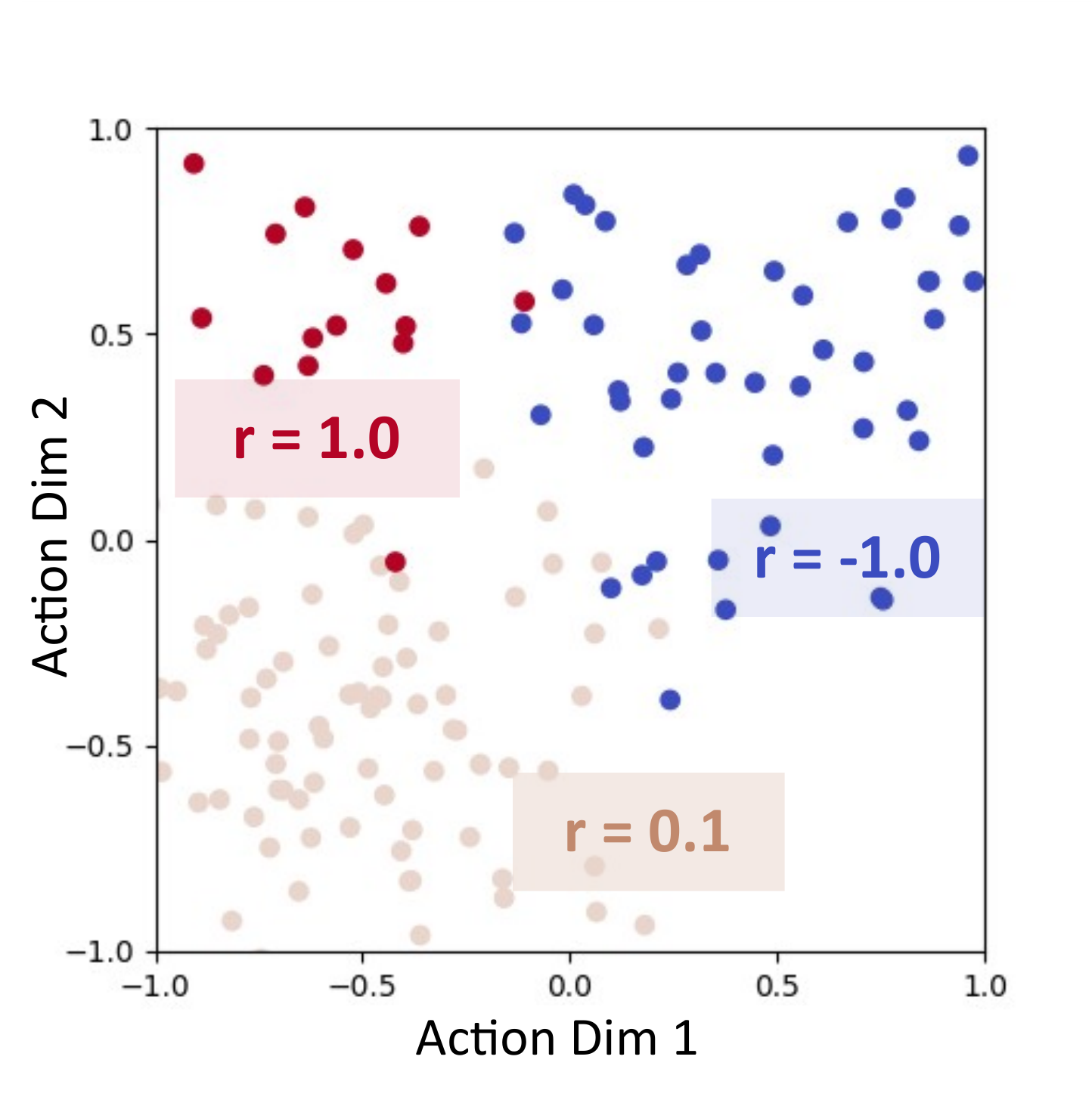}
        \caption{An example dataset for a one-step decision-making environment.}
        \label{fig:intro-toy-env}
    \end{subfigure}
    \hspace{0.01\textwidth}
    \begin{subfigure}[t]{0.30\textwidth}
        \centering
        \includegraphics[width=\textwidth]{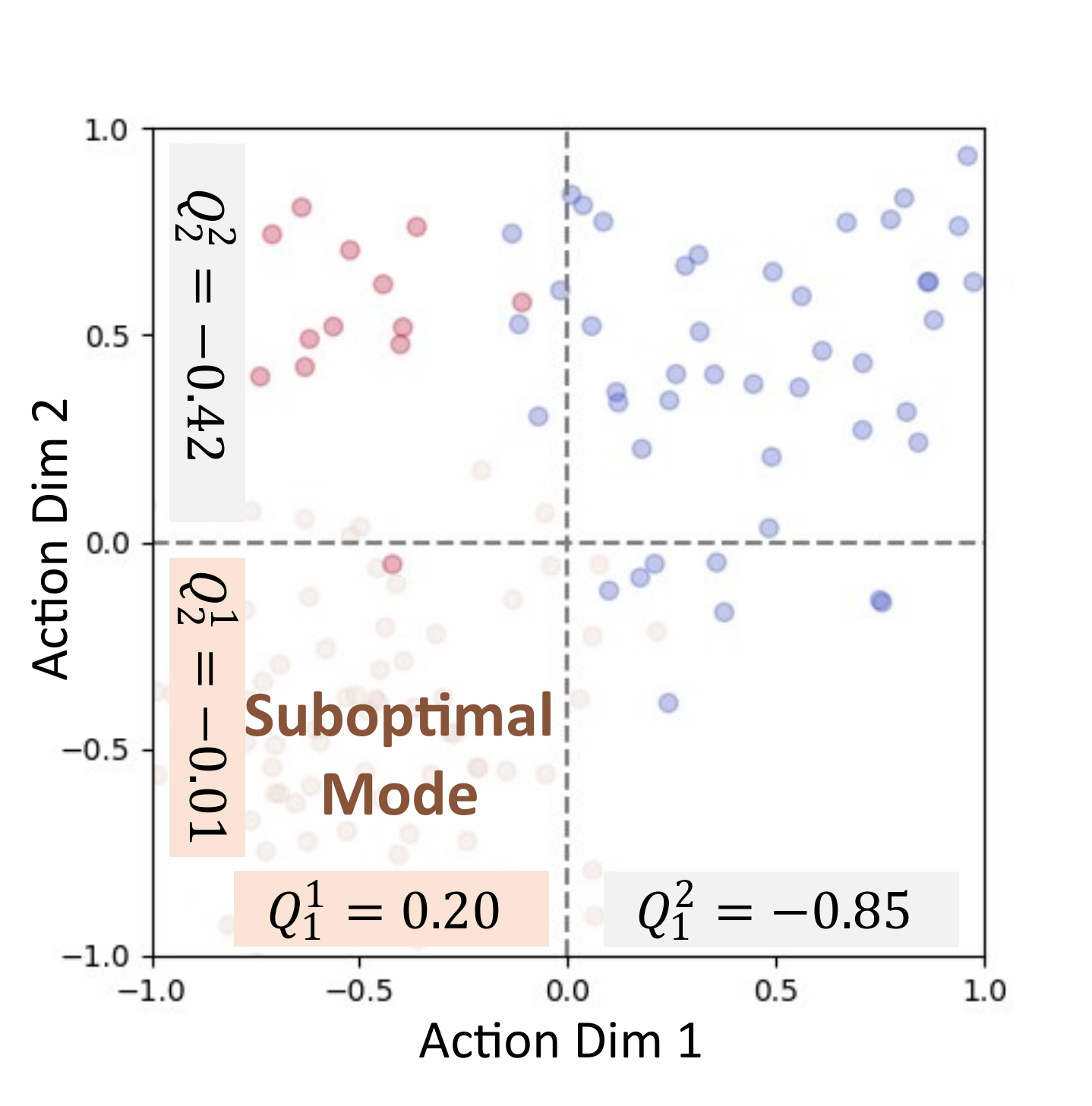}
        \caption{Q function given by independent action decomposition.}
        \label{fig:intro-toy-cqn}
    \end{subfigure}
    \hspace{0.01\textwidth}
    \begin{subfigure}[t]{0.30\textwidth}
        \centering
        \includegraphics[width=\textwidth]{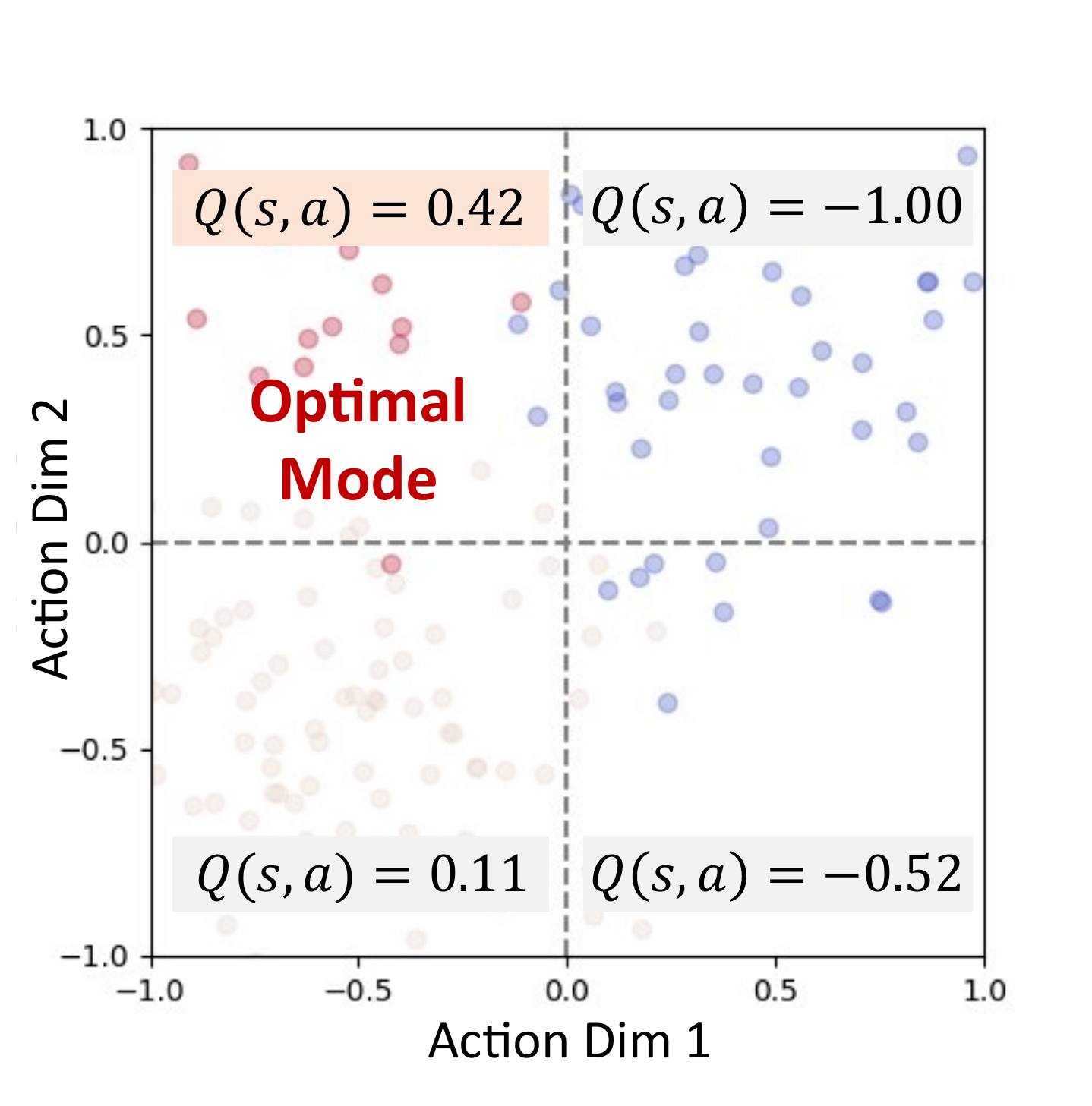}
        \caption{Q function given by auto-regressive action decomposition (Ours).}
        \label{fig:intro-toy-ar}
    \end{subfigure}
    \caption{A motivating example of how Q decomposition influences policy training, as detailed in Appendix~\ref{sec:app-example}.}
    \label{fig:intro-case}
\end{figure*}

Consider a simple one-step decision-making task with two-dimensional actions $(a_1,a_2) \in \mathcal{A} = [-1, 1]^2 \subset \mathbb{R}^2$, shown in Fig.~\ref{fig:intro-case}, where an agent selects an action $(a_1,a_2)$ given state $s$ and receives a reward $r$ before the episode terminates. Suppose the training dataset consists of three distinct modes: one optimal mode with $r=1$, two suboptimal modes with $r=0.1$ and $r=-1$, with the latter occurring more frequently. If the suboptimal modes are more prevalent in the dataset, conventional Q-learning approaches that estimate action dimensions independently, i.e., $Q(s,a_i)$, could undervalue the optimal mode. This bias can hinder the correct identification and reinforcement of the optimal action mode, leading to slow convergence and degraded policy performance.

To address this issue, we propose \arsq (ARSQ), a novel approach that captures cross-dimensional dependencies in discretized high-dimensional action spaces. 
Instead of treating each dimension independently, ARSQ adopts an auto-regressive structure, sequentially estimating advantages for each action dimension conditioned on the previously selected dimensions. This allows the method to better model interdependencies, ensuring that correlated action dimensions are jointly optimized rather than selected in isolation. Additionally, ARSQ adopts a coarse-to-fine hierarchical discretization strategy inspired by CQN~\cite{CQN}, further enhancing sample efficiency for fine-grained continuous control.
We theoretically show that the original Q function can be expanded into an auto-regressive formulation with dimensional advantage estimation under the framework of soft Q-learning. Our approach integrates these insights into an auto-regressive soft Q-network, which is specifically designed for continuous control tasks. 

To evaluate ARSQ, we conduct extensive experiments on the D4RL and RLBench continuous control benchmarks, challenging it against a variety of widely used reinforcement learning and imitation learning baselines. Results indicate that ARSQ consistently surpasses these baselines, achieving up to $1.62\times$ performance over existing value-based RL when trained with suboptimal demonstrations on D4RL. Ablation studies further highlight the significance of ARSQ’s key components, confirming its effectiveness in continuous control tasks.

Our contributions include:
\vspace{-0.5em}
\begin{itemize}
    \item We extend Soft Q-learning framework to value-based reinforcement learning with dimensional advantage estimation.
    \item We propose the ARSQ algorithm to capture dependencies in action dimensions and enhance learning from suboptimal data.
    \item Through extensive experiments, we demonstrate that ARSQ can learn better policies when data suboptimality arises from either offline datasets or data collected online.
\end{itemize}

\section{Related Works}

\paragraph{Value-based RL for Continuous Control.}
Despite their inherently straightforward critic-only framework, value-based reinforcement learning (RL) algorithms have achieved notable success \cite{NatureDQN,AlphaGo,AlphaZero,GQN,CQN}. 
Although these algorithms are primarily designed for discrete action spaces, recent efforts have sought to adapt them to continuous control by discretizing the continuous action space \cite{BDQ,DecQN}. 
However, the curse of dimensionality remains a significant challenge, as the number of discretization bins increases exponentially with the action dimension \cite{DDPG}.
To address this issue, some studies have modified the Markov Decision Process (MDP) of the environment, transforming it into a sequential decision-making problem along the action dimension \cite{SDQN,QTransformer}.
Other approaches treat each action dimension independently, generating the Q function separately for each dimension \cite{BDQ,HGQN,DecQN,GQN}, akin to treating each action dimension as a multi-agent RL problem \cite{COMA,MAPPO}.
Recent research \cite{CQN} has employed a coarse-to-fine discretization approach to improve sample efficiency.
However, treating each action dimension independently may disrupt the correlation between different action dimensions, potentially diminishing performance in policy optimization.
Some studies \cite{CQNAS} have attempted to solve this issue through action sequence prediction.
Our approach generates actions in an auto-regressive manner, considering the correlations between dimensions and improving policy learning, which is orthogonal to \cite{CQNAS}.

\vspace{-1em}
\paragraph{Online RL with Offline Demonstration.}
Deep reinforcement learning often requires a large amount of online interactions to achieve convergence \cite{DotaFive, NatureDQN}. 
To address this challenge, many methods have been proposed that leverage offline demonstrations to guide online exploration and accelerate policy training \cite{DAPG, RLPD}. 
Some approaches involve performing offline RL pretraining before initiating online RL training \cite{Off2On, CalQL, UniO4, BOORL}. 
However, these approaches often depend on expensive offline pretraining. 
To mitigate this, some works explore incorporating offline demonstration data directly into the training process. 
One strategy initializes the replay buffer with offline data \cite{DQfD, RLPD}, while another balances sampling between online and offline data to improve training stability \cite{PEX, Modem}. 
Additionally, certain methods explicitly introduce a behavior cloning loss to leverage high-quality demonstrations for better guidance \cite{NPPAC, DAPG, HERDDPG}.  
In this work, we adopt the paradigm of integrating offline demonstrations into training to enhance sample efficiency in continuous control tasks. 
Specifically, we improve value-based RL by introducing an auto-regressive structure that sequentially estimates advantage for each action dimension. 
This design enables better handling of suboptimal data, whether from offline demonstrations or trajectories collected during training.

\section{Preliminaries}
\subsection{Problem Formulation}

In this paper, we consider the standard RL setting with the addition of a pre-collected dataset \(\mathcal{D}\) for continuous control. 
The problem can be represented as MDP, defined by the tuple \((\mathcal{S}, \mathcal{A}, \gamma, p, r, d_0)\). 
Here, \(\mathcal{S}\) is the continuous state space, \(\mathcal{A}\) is the continuous action space, \(\gamma \in (0,1)\) is the discount factor, \(p(s' \mid s,a)\) is the transition dynamics, \(r(s,a)\) is the reward function, and \(d_0(s)\) is the distribution of the initial state.
In addition to interacting with the environment online, we assume access to a pre-collected dataset \(\mathcal{D}=\{(s_i,a_i,r_i,s_i')\}\), which can substantially reduce sample complexity and provide broader state-action coverage.

\subsection{Soft Q Learning}

To improve policy exploration, maximum entropy RL enhances the reward by adding an entropy term \cite{MaxEntIRL, SoftQ, SAC}, so the optimal policy seeks to maximize entropy at every state it visits. The objective is defined as
\begin{equation}
    J(\pi) =  \sum_{t=0}^{T} \mathbb{E}_{(\mathbf{s}_t, \mathbf{a}_t) \sim \rho_\pi} \left[ r(\mathbf{s}_t, \mathbf{a}_t) + \alpha \mathcal{H}(\pi(\cdot | \mathbf{s}_t)) \right]
    \label{eq:pre-soft-obj}
\end{equation}
where $\mathcal{H}$ is entropy, $T$ is the episode length and $\rho_\pi$ is the trajectory distribution induced by policy $\pi$.
The temperature parameter $\alpha$ dictates how much importance is placed on the entropy term in comparison to the reward. 
Let the soft Q-function and soft value function defined as:
\begin{equation}
\begin{split}
    & Q^*_{\text{soft}}(\mathbf{s}_t, \mathbf{a}_t) = r_t + \\
    & \mathbb{E}_{(\mathbf{s}_{t+1}, \dots) \sim \rho_\pi} 
    \left[ \sum_{l=1}^\infty \gamma^l \left( r_{t+l} + \alpha H\left(\pi^*(\cdot | \mathbf{s}_{t+l})\right) \right) \right]
\end{split}
\end{equation}
\begin{equation}
\label{eq:pre-soft-v}
    V^*_{\text{soft}}(\mathbf{s}_t) = \alpha \log \int_{\mathcal{A}} \exp \left( \frac{1}{\alpha} Q^*_{\text{soft}}(\mathbf{s}_t, \mathbf{a}') \right) d\mathbf{a}'
\end{equation}
Then the optimal policy for Eq.~(\ref{eq:pre-soft-obj}) is given by
\begin{equation}
\label{eq:pre-soft-policy}
    \pi^*(\mathbf{a}_t | \mathbf{s}_t) = \exp \left( \frac{1}{\alpha} \left( Q^*_{\text{soft}}(\mathbf{s}_t, \mathbf{a}_t) - V^*_{\text{soft}}(\mathbf{s}_t) \right) \right)
\end{equation}
Similar to the standard Q-function and value function, the Q-function can be connected to the value function at a future state using a soft Bellman equation.
\begin{equation}
\label{eq:pre-soft-update}
    Q^*_{\text{soft}}(\mathbf{s}_t, \mathbf{a}_t) = r_t + \gamma \mathbb{E}_{\mathbf{s}_{t+1} \sim p(\mathbf{s}_t, \mathbf{a}_t)} \left[ V^*_{\text{soft}}(\mathbf{s}_{t+1}) \right]
\end{equation}
The proof can be found in \cite{MaxEntIRL, SoftQ}.

\section{Method}
\label{sec:med}

In this section, we begin by discussing the process of discretizing multi-dimensional actions in a coarse-to-fine manner. 
Building on this, we extend the soft Q-learning theory with a focus on the dimensional soft advantage. 
Subsequently, we introduce our \arsq (ARSQ) algorithm, which is overviewed in Fig.~\ref{fig:med-main}.

\begin{figure*}[ht]
    \centering
    \includegraphics[width=\linewidth]{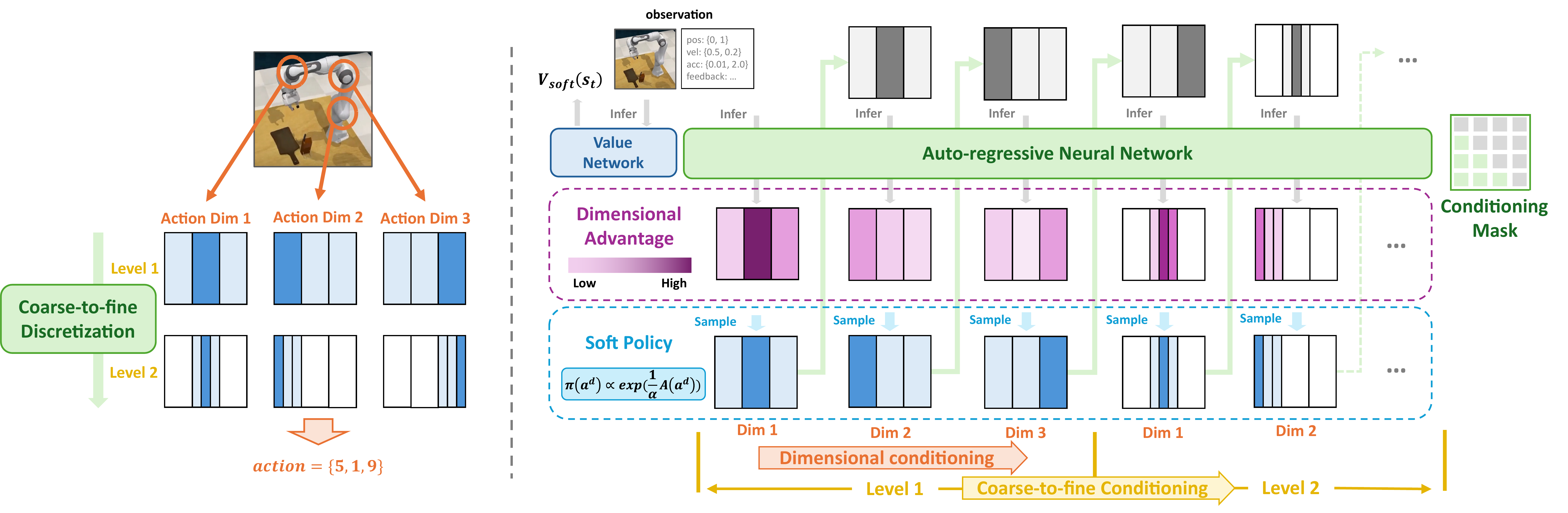}
    \caption{The ARSQ algorithm. The action space is discretized using a coarse-to-fine approach. By predicting dimensional soft advantages, ARSQ generates actions in an auto-regressive manner within a single decision-making step.}
    \label{fig:med-main}
\end{figure*}

\subsection{Coarse-to-fine Action Discretization}
\label{sec:med-coarse}

To apply Q-learning \cite{NatureDQN} in a continuous domain, a straightforward approach is to discretize the action space \cite{DiscretePPO, CQN}. 
For a continuous action of $d$ dimensions $\mathbf{a}_c = (a_c^1, a_c^2, \ldots, a_c^d) \in \mathbb{R}^D$, the discretized action $\mathbf{a} = (a^1, a^2, \ldots, a^D)$ can be represented by

\begin{equation}
    a^d = \arg \max_i | a_c^d - b_i |
\end{equation}

where $\mathbf{b} = (b_1, \ldots, b_B)$ are the centers of $B$ discretization intervals, or bins, which typically provide a uniform separation of the given action space.
However, obtaining a finer separation of the action space necessitates a greater number of bins, thereby increasing the computational load when assessing the Q function for each discrete action bin.

To address this issue, we can apply a coarse-to-fine action discretization approach \cite{CQN}, similar to the method used in \cite{HD-CNN} for computer vision, as illustrated in Fig.~\ref{fig:med-main}. 
With $L$ levels and $B$ uniform separation bins at each level, the discrete action for dimension $d$ at level $l$ is expressed as:
\begin{equation}
    a^{d, l} = \lfloor \frac{a^{d} - \sum_{i=1}^{l-1} B^{L-i} a^{d, i}}{B^{L-l}} \rfloor
\end{equation}
Here, $\lfloor \cdot \rfloor$ represents the floor function.

During inference, the policy generates discrete actions progressively through each level $(\mathbf{a}^{\langle \cdot \rangle, 1}, \mathbf{a}^{\langle \cdot \rangle, 2}, \cdots, \mathbf{a}^{\langle \cdot \rangle, L})$. 
These are then combined to produce the final discrete action.

\subsection{Dimensional Soft Advantage for Policy Representation}
\label{sec:med-dsa}

Building on action discretization, we initially extend soft Q-learning to discrete spaces. 
The soft value function is expressed as
\begin{equation}
    V^*_{\text{soft}}(\mathbf{s}) = \alpha \log \sum_{\mathbf{a}' \in \mathcal{A}} \exp \left( \frac{1}{\alpha} Q^*_{\text{soft}}(\mathbf{s}, \mathbf{a}') \right)
\end{equation}
And we omit the subscript $t$ for $\mathbf{s}_t$ and $\mathbf{a}_t$
To further streamline the expression of the policy, we define the soft advantage.

\begin{definition}[Soft Advantage]
    The soft advantage of $\mathbf{a}$ at $\mathbf{s}$ is given by
    \begin{equation}
    \label{eq:med-sa}
        A^*(\mathbf{s}, \mathbf{a}) = Q^*_{\text{soft}}(\mathbf{s}, \mathbf{a}) - V^*_{\text{soft}}(\mathbf{s})
    \end{equation}
\end{definition}

Similar to the advantage in policy gradient-based RL algorithms, the soft advantage assesses how much taking action $\mathbf{a}$ at state $\mathbf{s}$ is beneficial. 
Thus, the optimal policy in Eq.~(\ref{eq:pre-soft-policy}) can be expressed as
\begin{equation}
\label{eq:med-sa-pi}
    \pi^*(\mathbf{a} | \mathbf{s}) = \exp \left( \frac{1}{\alpha} A^*(\mathbf{s}, \mathbf{a}) \right)
\end{equation}
Considering the multi-dimensional action space, it still remains necessary to use a neural network to output $B^{L \times D}$ Q values in the final layer, as per the DQN \cite{NatureDQN}.

However, outputting such a large number of Q values imposes a significant computational burden on the neural network.
Inspired by auto-regression \cite{GPT3}, we address this problem by make the policy $\pi$ generate action $\mathbf{a} = (a^1, a^2, \ldots, a^D)$ auto-regressively along the action dimensions.

For clarity, we treats discrete action discussed in Sec.~\ref{sec:med-coarse} in one level. 
The multi-level coarse-to-fine discrete action can be considered as additional action dimensions, without compromising generalization.
We first define the dimensional soft advantage to represent the auto-regressive policy at dimension $d$.

\begin{definition}[Dimensional Soft Advantage]
    The dimensional soft advantage of the action $a^d$ at state $\mathbf{s}$, considering the previous generated dimensional actions $\mathbf{a}^{-d} = (a^1, \cdots, a^{d-1})$, is expressed by
    \begin{equation}
    \label{eq:med-dsa}
        \pi (a^d | \mathbf{s}, \mathbf{a}^{-d}) \propto
        exp \left( \frac{1}{\alpha} A^d(\mathbf{s}, \mathbf{a}^{-d}, a^d)) \right)
    \end{equation}
\end{definition}

However, the dimensional soft advantage is not normalized for each action dimension $d$.
We propose the following theorem to establish a connection between the dimensional soft advantage and the soft advantage.

\begin{theorem}
\label{the:med-sum}
    If the dimensional soft advantage $A^d(\mathbf{s}, \mathbf{a}^{-d}, a^d)$ satisfies
    \begin{equation}
    \label{eq:med-dsa-cond}
        \sum_{a^{d}} \exp{\left( \frac{1}{\alpha} A^d(\mathbf{s}, \mathbf{a}^{-d}, a^{d}) \right)} = 1
    \end{equation}
    for all dimension $d$, then the soft advantage can then be expressed as the summation of the dimensional soft advantages
    \begin{equation}
        \sum_{d=1}^D A^d(\mathbf{s}, \mathbf{a}^{-d}, a^d) = A(\mathbf{s}, \mathbf{a})
    \end{equation}
\end{theorem}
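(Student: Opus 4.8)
The plan is to reduce the statement to the chain rule of probability applied to the optimal soft policy. After the coarse‑to‑fine discretization of Sec.~\ref{sec:med-coarse}, the action set is finite, so every sum below is finite and no measure‑theoretic care is required. First I would record that, by \eqref{eq:med-sa-pi}, the optimal policy is $\pi^*(\mathbf{a}\mid\mathbf{s}) = \exp(\tfrac{1}{\alpha} A^*(\mathbf{s},\mathbf{a}))$, and that combining \eqref{eq:med-sa} with the discrete soft value function gives directly $\sum_{\mathbf{a}}\exp(\tfrac{1}{\alpha} A^*(\mathbf{s},\mathbf{a}))=1$ (the $\exp(\tfrac{1}{\alpha}V^*_{\text{soft}})$ in the denominator is exactly the sum over $\mathbf{a}$ of $\exp(\tfrac{1}{\alpha}Q^*_{\text{soft}})$). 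Hence $\pi^*(\cdot\mid\mathbf{s})$ is a genuine distribution over the discrete action space, and it admits the auto‑regressive factorization $\pi^*(\mathbf{a}\mid\mathbf{s}) = \prod_{d=1}^{D}\pi^*(a^d\mid\mathbf{s},\mathbf{a}^{-d})$, where $\pi^*(a^d\mid\mathbf{s},\mathbf{a}^{-d})$ is the conditional marginal of $\pi^*$ on the $d$‑th coordinate given $a^1,\dots,a^{d-1}$.

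Next I would pin down the proportionality constant hidden in the definition of the dimensional soft advantage. Writing \eqref{eq:med-dsa} as $\pi(a^d\mid\mathbf{s},\mathbf{a}^{-d}) = c_d(\mathbf{s},\mathbf{a}^{-d})\exp(\tfrac{1}{\alpha} A^d(\mathbf{s},\mathbf{a}^{-d},a^d))$ for a normalizer $c_d$ and summing over $a^d$: the left side equals $1$ because $\pi(\cdot\mid\mathbf{s},\mathbf{a}^{-d})$ is a distribution, while the right side equals $c_d(\mathbf{s},\mathbf{a}^{-d})$ by the hypothesis \eqref{eq:med-dsa-cond}. Therefore $c_d\equiv 1$, so $\exp(\tfrac{1}{\alpha} A^d(\mathbf{s},\mathbf{a}^{-d},a^d)) = \pi(a^d\mid\mathbf{s},\mathbf{a}^{-d}) = \pi^*(a^d\mid\mathbf{s},\mathbf{a}^{-d})$, the final equality holding because the auto‑regressive factors in the definition are, by construction, the coordinate conditionals of the policy $\pi^*$ being represented.

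Finally I would multiply the $D$ identities just obtained and use the factorization from the first step: $\prod_{d=1}^{D}\exp(\tfrac{1}{\alpha} A^d(\mathbf{s},\mathbf{a}^{-d},a^d)) = \prod_{d=1}^{D}\pi^*(a^d\mid\mathbf{s},\mathbf{a}^{-d}) = \pi^*(\mathbf{a}\mid\mathbf{s}) = \exp(\tfrac{1}{\alpha} A^*(\mathbf{s},\mathbf{a}))$. Taking logarithms and multiplying by $\alpha$ yields $\sum_{d=1}^{D}A^d(\mathbf{s},\mathbf{a}^{-d},a^d) = A^*(\mathbf{s},\mathbf{a}) = A(\mathbf{s},\mathbf{a})$, which is the claim.

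The one point that needs care — and essentially the only obstacle — is the identification $\pi(a^d\mid\mathbf{s},\mathbf{a}^{-d}) = \pi^*(a^d\mid\mathbf{s},\mathbf{a}^{-d})$: the normalization hypothesis \eqref{eq:med-dsa-cond} by itself only guarantees that each $\exp(\tfrac{1}{\alpha}A^d)$ is \emph{a} valid conditional distribution, not that the induced auto‑regressive policy coincides with $\pi^*$. I would therefore make explicit (in the statement, or in a remark preceding it) that the $A^d$ are defined so that their induced conditionals are the coordinate conditionals of $\pi^*$; with that reading, the chain‑rule computation above closes the argument, and \eqref{eq:med-dsa-cond} is precisely the compatibility condition that makes the decomposition well posed. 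Everything else is a routine log/sum manipulation over a finite action set.
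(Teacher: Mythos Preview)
Your proposal is correct and follows essentially the same route as the paper: factor the policy via the chain rule, use the normalization hypothesis \eqref{eq:med-dsa-cond} to set each partition function $Z^d(\mathbf{s},\mathbf{a}^{-d})$ to $1$, multiply the factors, and compare with \eqref{eq:med-sa-pi} to read off $\sum_d A^d = A$. The caveat you flag about identifying the auto-regressive conditionals with those of $\pi^*$ is exactly the implicit assumption the paper makes as well (it simply writes $\pi$ throughout and invokes \eqref{eq:med-sa-pi} at the end), so your reading is consistent with theirs.
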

\begin{proof}
    See Appendix~\ref{sec:app-proof}.
\end{proof}

Additionally, Eq.~(\ref{eq:med-dsa-cond}) shows that the exponential of the dimensional soft advantage represent a valid probability distribution.
Using Eq.~(\ref{eq:med-dsa}) together with Theorem~\ref{the:med-sum}, the dimensional soft advantage serves as a bridge between policy representation and Q prediction.

Since we do not introduce additional elements in policy optimization, the Q-iteration follows the same update rule as soft Q-learning.
Based on Eq.~(\ref{eq:pre-soft-update}), we have
\begin{equation}
\label{eq:med-dsa-update}
    V_{\text{soft}}(\mathbf{s}_t) + A(\mathbf{s}_t, \mathbf{a}_t) 
    \leftarrow r_t + \gamma \mathbb{E}_{\mathbf{s}_{t+1} \sim p(s)} \left[ V_{\text{soft}}(\mathbf{s}_{t+1}) \right]
\end{equation}
The maximum entropy policy described in Eq.~(\ref{eq:pre-soft-policy}) can be obtained by repeatedly applying Eq.~(\ref{eq:med-dsa-update}) until it converges.

\subsection{Auto-Regressive Soft Q-learning}
\label{sec:med-alg}
\begin{algorithm}[tb]
    \caption{Auto-Regressive Soft Q Algorithm (ARSQ) }
    \label{pse:med-alg}
\begin{algorithmic}
    \STATE Initialize $\theta_{1, 2}, \phi_{1, 2}$ for $A^{\theta_i}$ and $V_{\text{soft}}^{\phi_i}$
    \STATE Assign target parameters $\overline{\theta}_i,  \overline{\phi}_i \leftarrow \theta_i, \phi_i$.
    \STATE Offline dataset $\mathcal{D}$, replay buffer $\mathcal{R} \leftarrow \mathcal{D}$.
    \FOR{each epoch}
        \FOR{each environment step}
            \STATE select $\mathbf{a}_t$ with $A_{\theta_1}$ and $A_{\theta_2}$ (\ref{eq:med-sa-pi}, \ref{eq:med-alg-cons})
            \STATE $\mathbf{s}_{t+1} \sim p(\mathbf{s}_{t+1} | \mathbf{s}_t, \mathbf{a}_t)$
            \STATE $\mathcal{R} \leftarrow \mathcal{R} \cup \{ \mathbf{s}_t, \mathbf{a}_t, r_t, \mathbf{s}_{t+1} \}$
        \ENDFOR
        \FOR{each gradient step}
            \STATE Sample mini-batch $b_D$, $b_R$ from $\mathcal{D}$, $\mathcal{R}$
            \STATE Calculate $\mathcal{L}_D = \mathcal{L}_{RL} + \beta \mathcal{L}_{BC}$ with $b_D$ (\ref{eq:med-alg-bc}, \ref{eq:med-alg-rl})
            \STATE Calculate $\mathcal{L}_R = \mathcal{L}_{RL}$ with $b_R$ (\ref{eq:med-alg-rl})
            \STATE Update $m_{\theta_i}$ according to $\hat{\nabla}_{\theta_i}(\mathcal{L}_D + \mathcal{L}_R)$
            \STATE Update $V_{s, \phi_i}$ according to $\hat{\nabla}_{\phi_i}(\mathcal{L}_D + \mathcal{L}_R)$
            \STATE Update target networks $\overline{\theta}_i \leftarrow \rho \overline{\theta}_i + (1 - \rho) \theta_i$ and $\overline{\phi}_i \leftarrow \rho \overline{\phi}_i + (1 - \rho) \phi_i$.
        \ENDFOR
    \ENDFOR
\end{algorithmic}
\end{algorithm}

Building on the theory outlined in Sec.~\ref{sec:med-dsa}, we introduce the \arsq (ARSQ) algorithm. 
The pseudo code for the ARSQ algorithm is presented in Algorithm~\ref{pse:med-alg}. 
We will discuss the various design choices of ARSQ.

\paragraph{Behavior Cloning Objective.}
To leverage offline demonstration data during online training, we introduce an additional behavior cloning loss term. 
Following previous works \cite{DQfD,CQN}, we encourage actions present in the offline dataset to be preferred over other actions. 
Specifically, we define the loss as
\begin{equation}
\label{eq:med-alg-bc}
\begin{aligned}
    \mathcal{L}_{BC}^{d} = \sum_{a^{d}} \max ( 
    & A^{d, \theta_i}(\mathbf{s}, \mathbf{a}^{-d}_e, a^{d}) \\ 
    & - A^{d, \theta_i}(\mathbf{s}, \mathbf{a}^{-d}_e, a_e^{d}), C_m )
\end{aligned}
\end{equation}
where $\mathbf{a}_e$ denotes the expert action observed in the offline dataset, and $C_m$ is a hyper-parameter controlling the margin. 
This objective encourages the soft advantages of expert actions to be at least $C_m$ higher than those of other actions.

\paragraph{Policy Representation.}
As discussed in Sec. \ref{sec:med-dsa}, ARSQ predicts dimensional soft advantages, which function as both components of the Q function and policy representation. 
The network architecture is illustrated in Fig. \ref{fig:med-nn}.
In practical design, the soft value $V_{\text{soft}}$ and the dimensional soft advantage $A^d$ are predicted using two separate neural networks. 
The advantage prediction network estimates the dimensional soft advantage for each action dimension, based on the partially generated action from previous dimensions, creating an auto-regressive sequence. 
In practical design, we use a globally-shared MLP in the advantage network, with separate heads to predict the dimensional soft advantages.

\begin{figure*}[ht]
    \centering
    \includegraphics[width=0.9\linewidth]{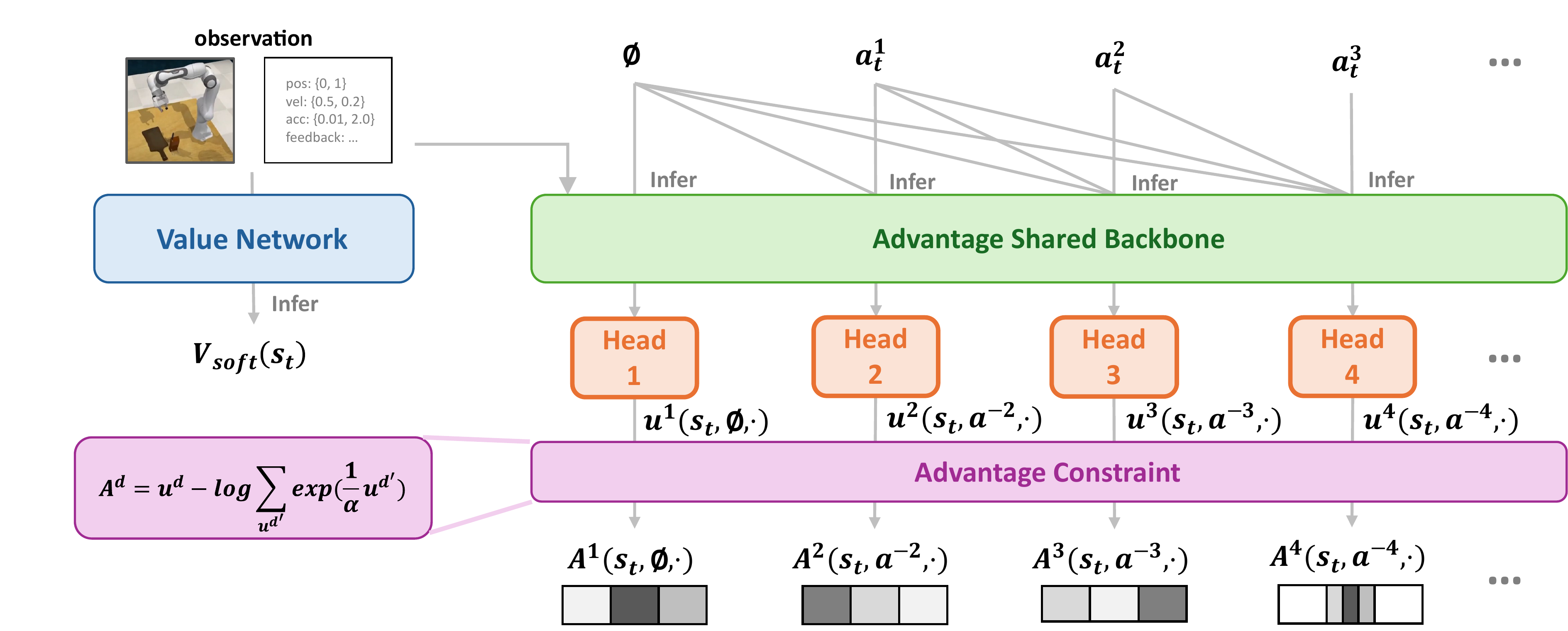}
    \caption{Network architecture of ARSQ. The soft value $V_{\text{soft}}$ and the dimensional soft advantage $A^d$ are predicted by two separate networks. The advantage network utilizes a shared backbone, and advantage constraints are applied to its output.}
    \label{fig:med-nn}
\end{figure*}

Another challenge is applying the constraint of the dimensional soft advantage as per Eq.~(\ref{eq:med-dsa-cond}).
Here, we enforce a hard constraint by normalizing each output head through log-sum-exp subtraction, ensuring consistency across outputs.
\begin{equation}
\label{eq:med-alg-cons}
\begin{aligned}
    A^d(\mathbf{s}_t, \mathbf{a}^{-d} &, a^d) 
    = u^d(\mathbf{s}_t, \mathbf{a}^{-d}, a^d) \\
    & - \alpha log \sum_{a^{d'}} \exp \left(
    \frac{1}{\alpha} u^d(\mathbf{s}_t, \mathbf{a}^{-d}, a^{d'})
    \right)
\end{aligned}
\end{equation}
where $u^d$ is the output of the $d$-th output head.

Furthermore, to stabilize training and address the over-estimation problem \cite{fujimoto2018addressing, DoubleQ}, we implement a double Q-learning approach with two separate value networks and their corresponding target networks. 
Specifically, we maintain two online value networks $V^{\phi_1}_{\text{soft}}$ and $V^{\phi_2}_{\text{soft}}$, along with their respective target networks $V^{\overline{\phi}_1}_{\text{soft}}$ and $V^{\overline{\phi}_2}_{\text{soft}}$.
Each target network is updated as an exponential moving average (EMA) of its respective online network parameters.
The value target is then computed by taking the minimum of the two target networks' predictions

\begin{equation}
    \mathbf{y}_t = \gamma \mathbb{E}_{\mathbf{s}_{t+1} \sim p(s)} \left[ \min \left( V^{\overline{\phi}_1}_{\text{soft}}(\mathbf{s}_{t+1}), V^{\overline{\phi}_2}_{\text{soft}}(\mathbf{s}_{t+1}) \right) \right]
\end{equation}
Thus the resulting optimization objective becomes
\begin{equation}
\label{eq:med-alg-rl}
    \mathcal{L}_{RL} = \frac{1}{2}\left(V^{\phi_i}_{\text{soft}}(\mathbf{s}_t) + A^{\theta_i}(\mathbf{s}_t, \mathbf{a}_t) - \mathbf{y}_t \right)^2
\end{equation}
where $A^{\theta_i}$ is the soft advantage function parameterized by $\theta_i$.

\paragraph{Auto-regressive Conditioning.} 
\label{sec:med-alg-ar}
In Sec.~\ref{sec:med-dsa}, we explained the process of handling discrete action in one coarse-to-fine level.
With multi-level coarse-to-fine action discretization, the auto-regressive conditioning encompasses two aspects. 
\emph{Dimensional conditioning} refers to generating actions for each dimension in an auto-regressive sequence, while \emph{coarse-to-fine conditioning} involves generating actions for each dimension from coarse to fine. 
In practice, we implement coarse-to-fine conditioning prior to dimensional conditioning. 
Specifically, dimensional conditioning serves as the inner conditioning, while coarse-to-fine conditioning acts as the outer conditioning across levels. 
We explore swapping the order of conditioning in Sec.~\ref{sec:exp-abl}, and the results indicate that the current design better captures interdependencies between action dimensions.

\section{Experiment}
\label{sec:exp}

We design our experiments to investigate the following questions: 
(i) What is ARSQ's performance when the offline dataset is suboptimal?
(ii) What is ARSQ's performance when online collected data is suboptimal?
(iii) How do various design factors of ARSQ affect the performance?

\textbf{Benchmarks.} We evaluate our approach on two continuous control benchmarks: D4RL ~\cite{D4RL} and RLBench ~\cite{RLBench}. Both domains provide access to online interaction data and a limited number of demonstrations, enabling us to assess the performance of ARSQ in diverse settings.
We present representative results here due to limited space and leave full results in Appendix~\ref{sec:app-exp}.

\textbf{Baselines.} We use CQN ~\cite{CQN}, a state-of-the-art value-based RL method for continuous control, as our baseline. CQN employs a coarse-to-fine action selection strategy and independently predicts Q-values for each action dimension. Additionally, CQN trains using a combination of online training and offline demonstrations.
Besides, we also include DrQ-v2 ~\cite{DrQv2}, a renowned actor-critic algorithm designed for vision-based RL, along with its enhanced version, DrQ-v2+, as benchmarks. 
We also feature ACT ~\cite{ACT} and a CQN-style behavior cloning (BC) policy among our baselines.
Details about the baselines can be found in Appendix~\ref{sec:app-baseline}.

\subsection{Performance on D4RL}
\label{sec:exp-d4rl}

\begin{figure*}[ht]
    \centering
    \includegraphics[width=0.95\linewidth]{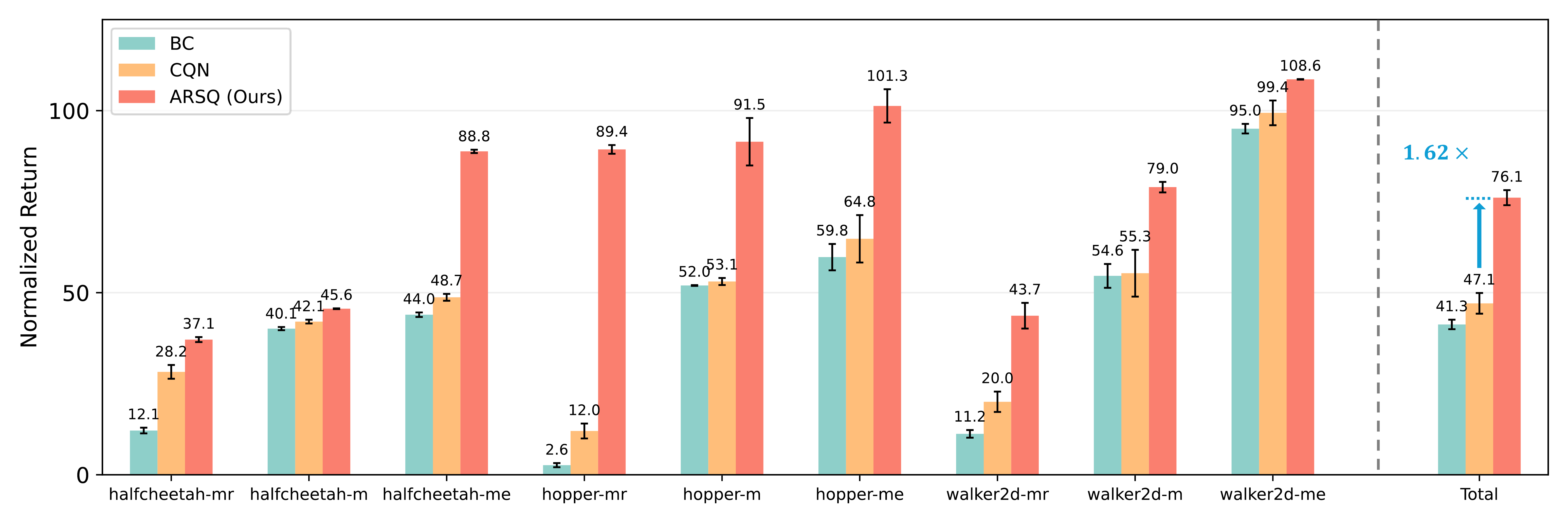}
    \caption{D4RL main results. \textit{mr}, \textit{m}, and \textit{me} represent \textit{medium-replay}, \textit{medium}, and \textit{medium-expert}, respectively.}
    \label{fig:exp-d4rl-full}
\end{figure*}

\paragraph{Main Results.}
To evaluate ARSQ's performance when the offline dataset is suboptimal, we consider three distinct locomotion tasks from the D4RL benchmark, each with three datasets of varying quality. 
The \textit{medium} dataset is gathered using a medium-level policy, whereas the \textit{medium-expert} dataset comprises a combination of medium-level and expert demonstrations. The \textit{medium-replay} dataset includes data ranging from completely random to medium-level.
The input to the model consists of state representations, while the output corresponds to torques applied at each hinge joint. 
A dense reward is provided to encourage completing the task, staying alive, and discourage vigorous actions that consume excessive energy.

We evaluate ARSQ, CQN ~\cite{CQN}, and BC in this setting.
At the beginning of online training, the replay buffer for both ARSQ and CQN is initialized with an offline dataset, and online data is added as the training progresses.
Additionally, both ARSQ and CQN incorporate the BC objective (Eq.~(\ref{eq:med-alg-bc})) towards offline dataset.
The BC baseline is trained solely offline using the offline dataset with the BC objective.
We report the converged performance of ARSQ, CQN and BC, averaged over three random seeds. 

As shown in Fig.~\ref{fig:exp-d4rl-full}, ARSQ exhibits outstanding performance across all nine datasets, demonstrating its ability to effectively identify suboptimal actions and learn more efficiently from the available offline data. 
ARSQ surpasses CQN, particularly in the \textit{medium-replay} and \textit{medium-expert} datasets, where optimal data is not predominant, highlighting that ARSQ is less biased toward frequently observed suboptimal actions. 
Notably, both ARSQ and CQN outperform BC, indicating that conducting reinforcement learning online enhances policy performance.

\begin{figure}[ht]
    \centering
    \includegraphics[width=0.9\linewidth]{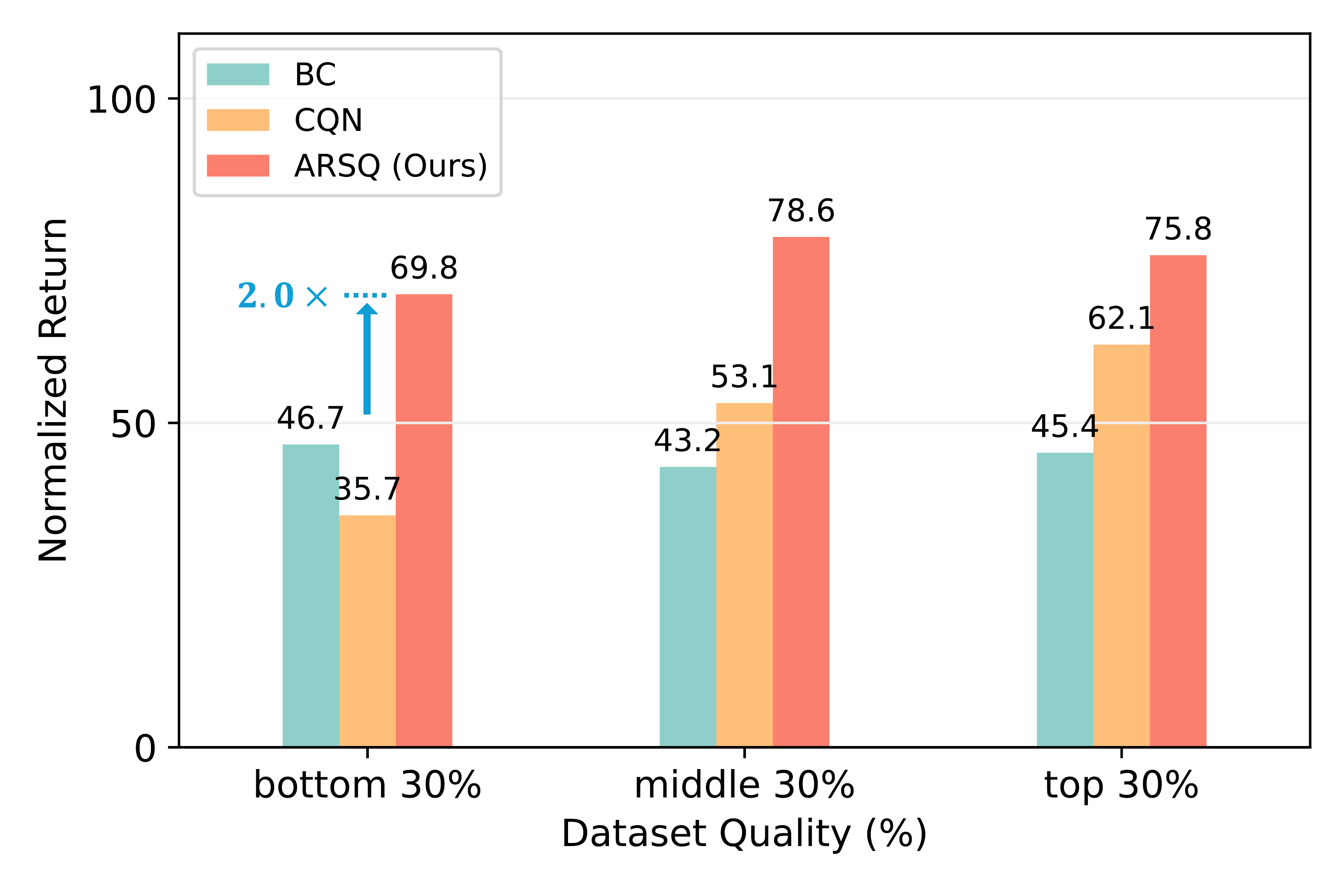}
    \caption{D4RL results on different demonstration quality averaged over 3 tasks, with each task containing 3 datasets respectively. We report the normalized return provided by D4RL.}
    \label{fig:exp-d4rl-quality-main}
\end{figure}

\paragraph{Analysis on Demonstration Quality.}
To better investigate the influence of dataset quality, we rank trajectories by episode return for each dataset, and labeling the top $30$\%, middle $30$\%, and bottom $30$\% of the data as offline demonstrations.
The behavior cloning objective is applied only to these offline demonstrations. 
We report the converged performance ARSQ, CQN and BC over three random seeds.
As illustrated in Fig.~\ref{fig:exp-d4rl-quality-main}, ARSQ consistently outperforms both CQN and BC across all three levels of demonstration quality. 
Notably, when using the bottom $30$\% of data as offline demonstrations, ARSQ achieves approximately $2.0 \times$ the final performance of CQN.
In contrast, with the lowest demonstration quality, CQN performs slightly worse than BC, revealing CQN's sensitivity to demonstration quality, which negatively affects its online training. 
These results further validate the effectiveness of our method when using suboptimal offline datasets.

\subsection{Performance on RLBench}
\label{sec:exp-rlb}

To further evaluate ARSQ's performance, we focus on six tasks from RLBench ~\cite{RLBench}. 
The agent receives input as RGB images and proprioceptive states and outputs the change in joint angles to control the robot arm. 
Unlike D4RL, the reward is sparse, offering a binary value (0 or 1) only at the final timestamp. 
Although each task is provided with 100 expert demonstrations, the agent might gather unsuccessful trajectories during its interaction with the environment. 
This setup allows us to examine the performance when the data collected online is suboptimal.

In this domain, we evaluate the performance of ARSQ, CQN, DrQ-v2+, DrQ-v2, ACT and BC.
All reinforcement learning methods incorporate the behavior cloning objective (Eq.~(\ref{eq:med-alg-bc})) towards expert demonstrations and successful trajectories collected online.
Results are averaged over three random seeds.

\begin{figure*}[htbp]
    \centering
    \includegraphics[width=0.95\linewidth]{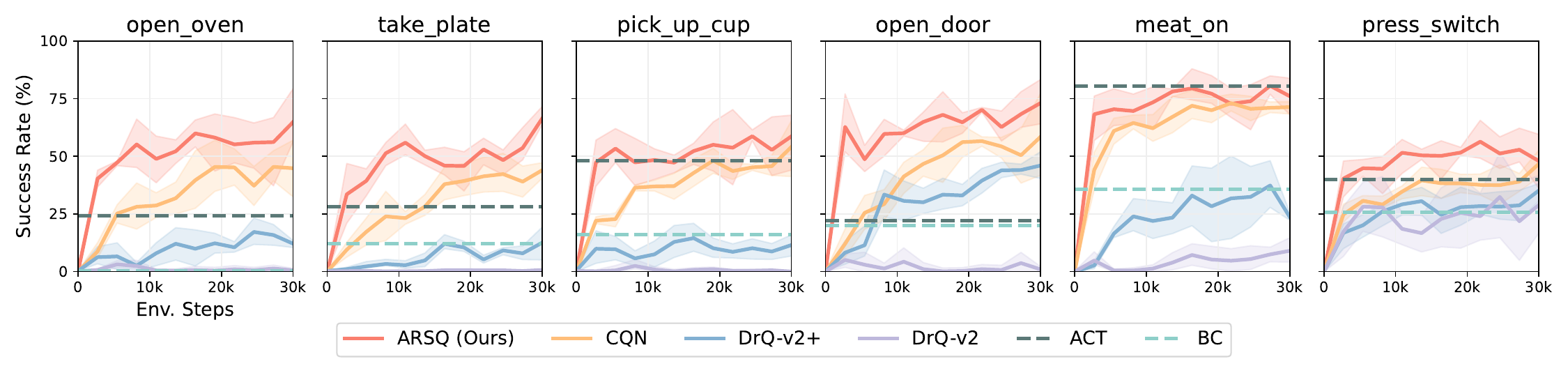}
    \caption{RLBench results on different tasks. Each experiment begins with 100 expert demonstrations, and all RL methods include a behavior cloning objective.}
    \label{fig:exp-rlb-task}
\end{figure*}

As shown in Fig.~\ref{fig:exp-rlb-task}, ARSQ demonstrates superior performance compared to all other algorithms, highlighting its effectiveness in online learning with suboptimal collected data.
Additionally, ARSQ exceeds ACT, highlighting the importance of reinforcement learning in online training.

\subsection{Performance under Fully Offline Setting}

To further examine the performance of ARSQ, we conduct experiments in a fully offline setting, where the algorithm learns solely from a predetermined dataset. 
We utilize nine locomotion datasets from D4RL, as outlined in Sec.~\ref{sec:exp-d4rl}. 
For offline RL methods, we employ CQL~\cite{CQL}, IQL~\cite{IQL}, TD3+BC~\cite{TD3BC}, Onestep RL~\cite{OnestepRL}, and RvS-R~\cite{Rvs} as baselines. 
For offline imitation learning methods capable of handling suboptimal data, we use filtered BC ~\cite{DecisionTransformer, Rvs}, Decision Transformer~\cite{DecisionTransformer}, and DWBC~\cite{DWBC} as baselines.
For DWBC, We adopt its best performance under ``Setting 2'' from its original paper~\cite{DWBC}, which mark top 5\% trajectories as expert trajectories based on total reward, and we re-evaluate DWBC under the same conditions on additional datasets.

\begin{table*}[ht]
\centering
\caption{Performance under fully offline setting.\label{tab:exp-d4rl-offline}}
\begin{tabular}{lccccccccc}
\toprule
Dataset & CQL & IQL & TD3+BC & Onestep RL & RvS-R & Filt. BC & DT & DWBC & ARSQ (Ours) \\
\midrule
halfcheetah-m  & 44.0     & 47.4  & 48.3     & \textbf{48.4}   & 41.6  & 42.5      & 42.6  & *41.4   & 43.7 ± 0.6      \\
hopper-m       & 58.5     & 66.3  & 59.3     & 59.6       & 60.2  & 56.9      & 67.6  & *56.0   & \textbf{99.2 ± 0.5}  \\
walker2d-m     & 72.5     & 78.3  & \textbf{83.7} & 81.8       & 71.7  & 75.0      & 74.0  & *72.3   & 81.2 ± 0.9      \\
halfcheetah-mr & \textbf{45.5} & 44.2  & 44.6     & 38.1       & 38.0  & 40.6      & 36.6  & 38.9    & 41.1 ± 0.1      \\
hopper-mr      & 95.0     & 94.7  & 60.9     & \textbf{97.5}   & 73.5  & 75.9      & 82.7  & 73.0    & 90.7 ± 4.4      \\
walker2d-mr    & 77.2     & 73.9  & \textbf{81.8} & 49.5       & 60.6  & 62.5      & 66.6  & 59.8    & 74.0 ± 2.6      \\
halfcheetah-me & 91.6     & 86.7  & 90.7     & \textbf{93.4}   & 92.2  & 92.9      & 86.8  & *93.1   & 92.4 ± 1.2      \\
hopper-me      & 105.4    & 91.5  & 98.0     & 103.3      & 101.7 & \textbf{110.9} & 107.6 & *110.4  & \textbf{110.9 ± 1.0} \\
walker2d-me    & 108.8    & 109.6 & 110.1    & \textbf{113.0}  & 106.0 & 109.0     & 108.1 & *108.3  & 107.9 ± 0.3     \\
\midrule
\textbf{Total} & 698.5    & 692.4 & 677.4    & 684.6      & 645.5 & 666.2     & 672.6 & 653.2   & \textbf{741.1}       \\
\bottomrule
\end{tabular}
\end{table*}

The results are presented in Tab.~\ref{tab:exp-d4rl-offline}.
All data is sourced from the respective papers, with the reevaluated DWBC results marked by ``*''.
Both ARSQ and re-evaluated DWBC are assessed using 10 trajectories over three random seeds. 
ARSQ demonstrates superior overall performance compared to the other baselines, indicating its capability to effectively manage suboptimal data under fully offline setting.

\subsection{Ablation Studies}
\label{sec:exp-abl}

In this section, we evaluate the impact of key design factors in ARSQ: auto-regressive conditioning (Fig.\ref{fig:med-main}) and advantage prediction network (Fig.\ref{fig:med-nn}).

\paragraph{Ablation on Auto-regressive Conditioning.}
We consider several variants of ARSQ on auto-regressive conditioning.
\begin{itemize}\setlength{\itemsep}{0pt}
    \item \textit{Swap}: We reverse the conditioning order, applying dimensional conditioning first, followed by coarse-to-fine conditioning.
    \item \textit{w/o CF Cond.}: We remove the coarse-to-fine conditioning and output actions at multiple levels simultaneously.
    \item \textit{w/o Dim Cond.}: We remove the dimensional conditioning and instead output all action dimensions simultaneously at each level.
    \item \textit{w/o CF}: We replace the coarse-to-fine structure entirely by discretizing each action dimension into  $B^L$  bins and then applying dimensional conditioning.
    \item \textit{Plain}: We remove both the coarse-to-fine structure and dimensional conditioning.
\end{itemize}

\begin{figure}[ht]
    \centering
    \begin{subfigure}[t]{0.22\textwidth}
        \centering
        \includegraphics[width=\textwidth]{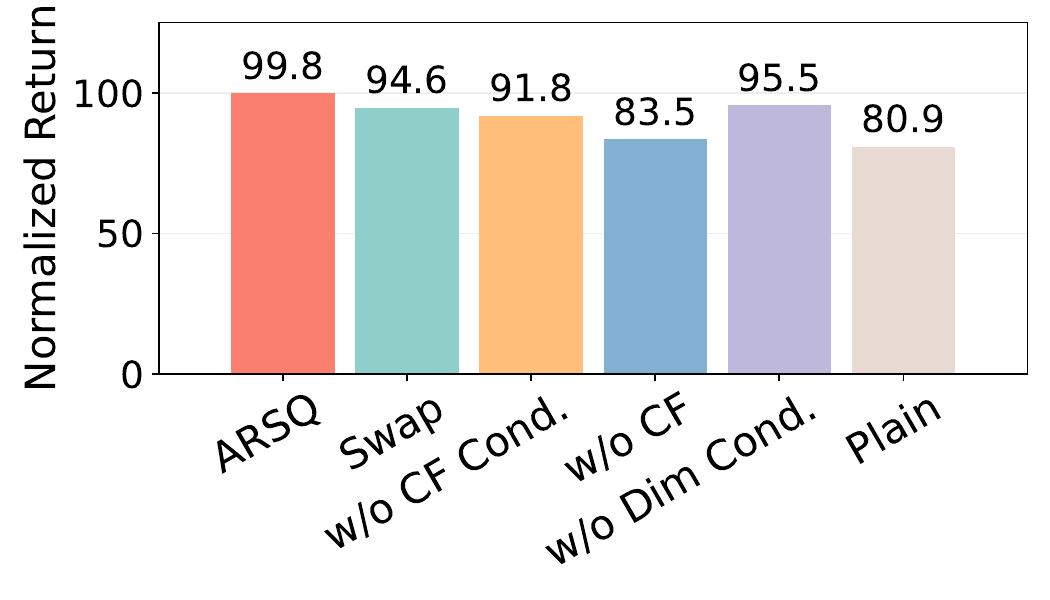}
        \label{fig:exp-abl-ar-d4rl}
    \end{subfigure}
    \hspace{0.01\textwidth}
    \begin{subfigure}[t]{0.22\textwidth}
        \centering
        \includegraphics[width=\textwidth]{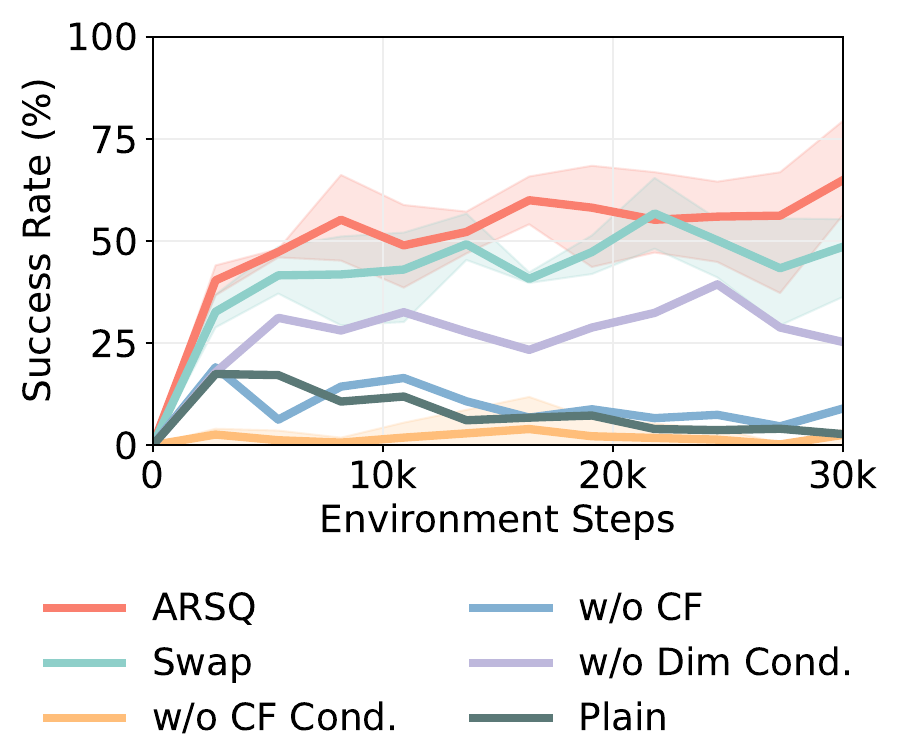}
        \label{fig:exp-abl-ar-rlb}
    \end{subfigure}
    \caption{Ablation on auto-regressive conditioning in D4RL (left) and RLBench (right).}
    \label{fig:exp-abl-ar}
\end{figure}

We report results on \textit{hopper-medium-expert} and \textit{hopper-medium-replay} from D4RL, as well as \textit{Open Oven} from RLBench, all evaluated across three random seeds.
As depicted in Fig.~\ref{fig:exp-abl-ar}, \textit{Swap} demonstrates a slight decline in performance, underscoring the effectiveness of the current conditioning order design.
Additionally, removing any of the components degrades performance to varying degrees.
When all components are removed, as in \textit{Plain}, the performance is at its lowest, emphasizing the significance of dimensional and coarse-to-fine action generation.

\paragraph{Ablation on Shared Backbone.}
The advantage network of ARSQ utilizes a shared backbone to reduce the number of parameters and speed up the learning process. 
To assess the impact of this choice, we introduce two variants.
The network architecture of these two variants can be found in Appendix~\ref{sec:app-baseline}.
\vspace{-2mm}
\begin{itemize}\setlength{\itemsep}{0pt}
\item \textit{Separate}: We employ separate networks for each action dimension.
\item \textit{Level Shared}: We employ shared networks for each coarse-to-fine level.
\end{itemize}

\begin{figure}[h]
    \centering
    \begin{subfigure}[t]{0.22\textwidth}
        \centering
        \includegraphics[width=\textwidth]{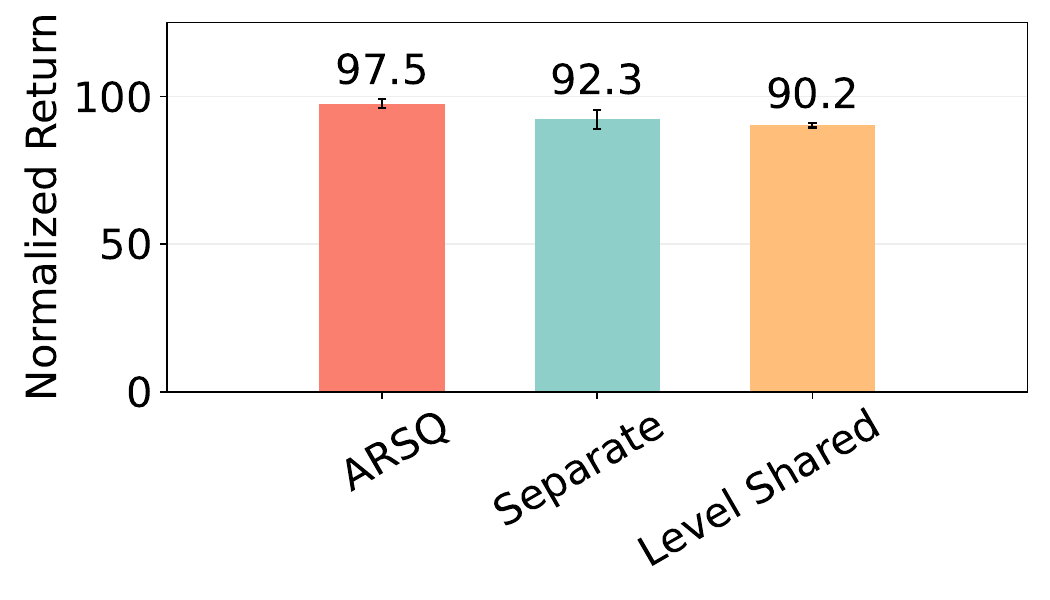}
        \label{fig:exp-abl-qc-d4rl}
    \end{subfigure}
    \hspace{0.01\textwidth}
    \begin{subfigure}[t]{0.22\textwidth}
        \centering
        \includegraphics[width=\textwidth]{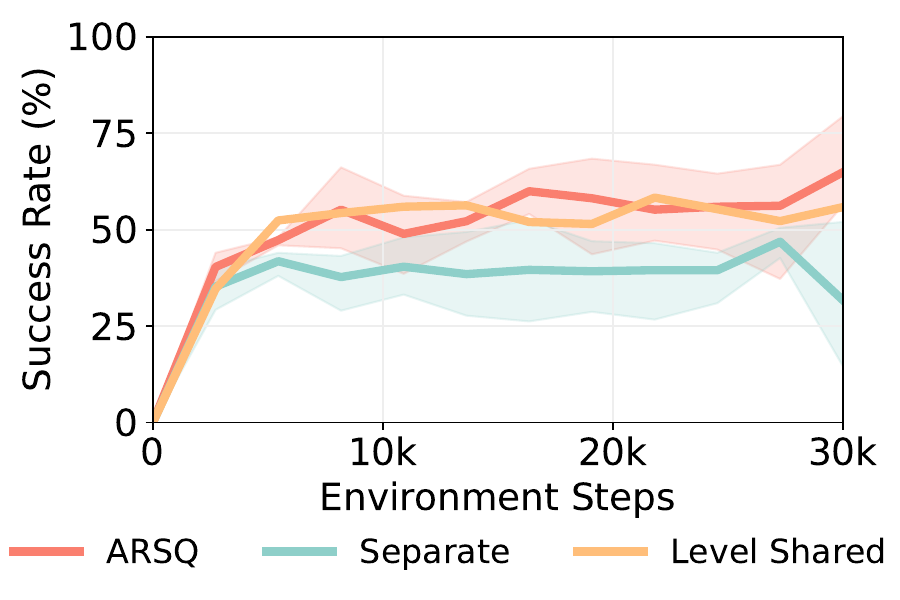}
        \label{fig:exp-abl-qc-rlb}
    \end{subfigure}
    \caption{Ablation on shared backbone in D4RL (left) and RLBench (right).}
    \label{fig:exp-abl-qc}
\end{figure}

We report results on \textit{hopper-medium} from D4RL and \textit{Open Oven} from RLBench over three random seeds.
As shown in Fig.~\ref{fig:exp-abl-qc}, the standard ARSQ consistently performs well in both environments. In contrast, using either a level-shared or separate backbone results in diminished performance.
This demonstrates the effectiveness of the shared backbone design.

\section{Conclusion}

In this paper, we introduced \arsq (ARSQ), a novel value-based RL approach tailored for continuous control tasks with suboptimal data. 
ARSQ addresses the limitations of existing value-based methods by adopting an auto-regressive structure that sequentially estimates soft advantage for each action dimension, thereby capturing cross-dimensional dependencies. 
Through empirical evaluations, we show that ARSQ significantly surpasses existing methods, highlighting its effectiveness in learning from suboptimal data.

For future directions, an adaptive coarse-to-fine discretization can be used to balance control granularity with the overhead of additional bins. 
Another approach to explore is grouping unrelated dimensions to shorten the conditioning chain length, thereby speeding up computation.

\section*{Acknowledgements}

This work was supported by National Natural Science Foundation of China (No.62406159, 62325405), Postdoctoral Fellowship Program of CPSF under Grant Number (GZC20240830, 2024M761676), China Postdoctoral Science Special Foundation 2024T170496.

\section*{Impact Statement}

This paper presents work whose goal is to advance the field of Machine Learning. There are many potential societal consequences of our work, none which we feel must be specifically highlighted here.

\nocite{langley00}

\bibliography{example_paper}

\begin{thebibliography}{50}
\providecommand{\natexlab}[1]{#1}
\providecommand{\url}[1]{\texttt{#1}}
\expandafter\ifx\csname urlstyle\endcsname\relax
  \providecommand{\doi}[1]{doi: #1}\else
  \providecommand{\doi}{doi: \begingroup \urlstyle{rm}\Url}\fi

\bibitem[Ba(2016)]{LayerNorm}
Ba, J.~L.
\newblock Layer normalization.
\newblock \emph{arXiv preprint arXiv:1607.06450}, 2016.

\bibitem[Ball et~al.(2023)Ball, Smith, Kostrikov, and Levine]{RLPD}
Ball, P.~J., Smith, L., Kostrikov, I., and Levine, S.
\newblock Efficient online reinforcement learning with offline data.
\newblock In Krause, A., Brunskill, E., Cho, K., Engelhardt, B., Sabato, S., and Scarlett, J. (eds.), \emph{Proceedings of the 40th International Conference on Machine Learning}, volume 202 of \emph{Proceedings of Machine Learning Research}, pp.\  1577--1594. PMLR, 23--29 Jul 2023.

\bibitem[Berner et~al.(2019)Berner, Brockman, Chan, Cheung, Debiak, Dennison, Farhi, Fischer, Hashme, Hesse, et~al.]{DotaFive}
Berner, C., Brockman, G., Chan, B., Cheung, V., Debiak, P., Dennison, C., Farhi, D., Fischer, Q., Hashme, S., Hesse, C., et~al.
\newblock Dota 2 with large scale deep reinforcement learning.
\newblock \emph{arXiv preprint arXiv:1912.06680}, 2019.

\bibitem[Brandfonbrener et~al.(2021)Brandfonbrener, Whitney, Ranganath, and Bruna]{OnestepRL}
Brandfonbrener, D., Whitney, W., Ranganath, R., and Bruna, J.
\newblock Offline rl without off-policy evaluation.
\newblock In Ranzato, M., Beygelzimer, A., Dauphin, Y., Liang, P., and Vaughan, J.~W. (eds.), \emph{Advances in Neural Information Processing Systems}, volume~34, pp.\  4933--4946. Curran Associates, Inc., 2021.

\bibitem[Brown et~al.(2020)Brown, Mann, Ryder, Subbiah, Kaplan, Dhariwal, Neelakantan, Shyam, Sastry, Askell, Agarwal, Herbert-Voss, Krueger, Henighan, Child, Ramesh, Ziegler, Wu, Winter, Hesse, Chen, Sigler, Litwin, Gray, Chess, Clark, Berner, McCandlish, Radford, Sutskever, and Amodei]{GPT3}
Brown, T., Mann, B., Ryder, N., Subbiah, M., Kaplan, J.~D., Dhariwal, P., Neelakantan, A., Shyam, P., Sastry, G., Askell, A., Agarwal, S., Herbert-Voss, A., Krueger, G., Henighan, T., Child, R., Ramesh, A., Ziegler, D., Wu, J., Winter, C., Hesse, C., Chen, M., Sigler, E., Litwin, M., Gray, S., Chess, B., Clark, J., Berner, C., McCandlish, S., Radford, A., Sutskever, I., and Amodei, D.
\newblock Language models are few-shot learners.
\newblock In Larochelle, H., Ranzato, M., Hadsell, R., Balcan, M., and Lin, H. (eds.), \emph{Advances in Neural Information Processing Systems}, volume~33, pp.\  1877--1901. Curran Associates, Inc., 2020.

\bibitem[Chebotar et~al.(2023)Chebotar, Vuong, Hausman, Xia, Lu, Irpan, Kumar, Yu, Herzog, Pertsch, Gopalakrishnan, Ibarz, Nachum, Sontakke, Salazar, Tran, Peralta, Tan, Manjunath, Singh, Zitkovich, Jackson, Rao, Finn, and Levine]{QTransformer}
Chebotar, Y., Vuong, Q., Hausman, K., Xia, F., Lu, Y., Irpan, A., Kumar, A., Yu, T., Herzog, A., Pertsch, K., Gopalakrishnan, K., Ibarz, J., Nachum, O., Sontakke, S.~A., Salazar, G., Tran, H.~T., Peralta, J., Tan, C., Manjunath, D., Singh, J., Zitkovich, B., Jackson, T., Rao, K., Finn, C., and Levine, S.
\newblock Q-transformer: Scalable offline reinforcement learning via autoregressive q-functions.
\newblock In Tan, J., Toussaint, M., and Darvish, K. (eds.), \emph{Proceedings of The 7th Conference on Robot Learning}, volume 229 of \emph{Proceedings of Machine Learning Research}, pp.\  3909--3928. PMLR, 06--09 Nov 2023.

\bibitem[Chen et~al.(2021)Chen, Lu, Rajeswaran, Lee, Grover, Laskin, Abbeel, Srinivas, and Mordatch]{DecisionTransformer}
Chen, L., Lu, K., Rajeswaran, A., Lee, K., Grover, A., Laskin, M., Abbeel, P., Srinivas, A., and Mordatch, I.
\newblock Decision transformer: Reinforcement learning via sequence modeling.
\newblock In Ranzato, M., Beygelzimer, A., Dauphin, Y., Liang, P., and Vaughan, J.~W. (eds.), \emph{Advances in Neural Information Processing Systems}, volume~34, pp.\  15084--15097. Curran Associates, Inc., 2021.

\bibitem[Emmons et~al.(2022)Emmons, Eysenbach, Kostrikov, and Levine]{Rvs}
Emmons, S., Eysenbach, B., Kostrikov, I., and Levine, S.
\newblock Rvs: What is essential for offline {RL} via supervised learning?
\newblock In \emph{International Conference on Learning Representations}, 2022.
\newblock URL \url{https://openreview.net/forum?id=S874XAIpkR-}.

\bibitem[Foerster et~al.(2018)Foerster, Farquhar, Afouras, Nardelli, and Whiteson]{COMA}
Foerster, J., Farquhar, G., Afouras, T., Nardelli, N., and Whiteson, S.
\newblock Counterfactual multi-agent policy gradients.
\newblock \emph{Proceedings of the AAAI Conference on Artificial Intelligence}, 32\penalty0 (1), Apr. 2018.
\newblock \doi{10.1609/aaai.v32i1.11794}.

\bibitem[Fu et~al.(2020)Fu, Kumar, Nachum, Tucker, and Levine]{D4RL}
Fu, J., Kumar, A., Nachum, O., Tucker, G., and Levine, S.
\newblock D4rl: Datasets for deep data-driven reinforcement learning.
\newblock \emph{arXiv preprint arXiv:2004.07219}, 2020.

\bibitem[Fujimoto \& Gu(2021)Fujimoto and Gu]{TD3BC}
Fujimoto, S. and Gu, S.~S.
\newblock A minimalist approach to offline reinforcement learning.
\newblock In Ranzato, M., Beygelzimer, A., Dauphin, Y., Liang, P., and Vaughan, J.~W. (eds.), \emph{Advances in Neural Information Processing Systems}, volume~34, pp.\  20132--20145. Curran Associates, Inc., 2021.

\bibitem[Fujimoto et~al.(2018)Fujimoto, van Hoof, and Meger]{fujimoto2018addressing}
Fujimoto, S., van Hoof, H., and Meger, D.
\newblock Addressing function approximation error in actor-critic methods.
\newblock In Dy, J. and Krause, A. (eds.), \emph{Proceedings of the 35th International Conference on Machine Learning}, volume~80 of \emph{Proceedings of Machine Learning Research}, pp.\  1587--1596. PMLR, 10--15 Jul 2018.

\bibitem[Haarnoja et~al.(2017)Haarnoja, Tang, Abbeel, and Levine]{SoftQ}
Haarnoja, T., Tang, H., Abbeel, P., and Levine, S.
\newblock Reinforcement learning with deep energy-based policies.
\newblock In Precup, D. and Teh, Y.~W. (eds.), \emph{Proceedings of the 34th International Conference on Machine Learning}, volume~70 of \emph{Proceedings of Machine Learning Research}, pp.\  1352--1361. PMLR, 06--11 Aug 2017.

\bibitem[Haarnoja et~al.(2018)Haarnoja, Zhou, Abbeel, and Levine]{SAC}
Haarnoja, T., Zhou, A., Abbeel, P., and Levine, S.
\newblock Soft actor-critic: Off-policy maximum entropy deep reinforcement learning with a stochastic actor.
\newblock In Dy, J. and Krause, A. (eds.), \emph{Proceedings of the 35th International Conference on Machine Learning}, volume~80 of \emph{Proceedings of Machine Learning Research}, pp.\  1861--1870. PMLR, 10--15 Jul 2018.

\bibitem[Hansen et~al.(2023)Hansen, Lin, Su, Wang, Kumar, and Rajeswaran]{Modem}
Hansen, N., Lin, Y., Su, H., Wang, X., Kumar, V., and Rajeswaran, A.
\newblock Modem: Accelerating visual model-based reinforcement learning with demonstrations.
\newblock In \emph{The Eleventh International Conference on Learning Representations}, 2023.
\newblock URL \url{https://openreview.net/forum?id=JdTnc9gjVfJ}.

\bibitem[Haykin(1998)]{MLP}
Haykin, S.
\newblock \emph{Neural networks: a comprehensive foundation}.
\newblock Prentice Hall PTR, 1998.

\bibitem[Hendrycks \& Gimpel(2016)Hendrycks and Gimpel]{SiLU}
Hendrycks, D. and Gimpel, K.
\newblock Gaussian error linear units (gelus).
\newblock \emph{arXiv preprint arXiv:1606.08415}, 2016.

\bibitem[Hester et~al.(2018)Hester, Vecerik, Pietquin, Lanctot, Schaul, Piot, Horgan, Quan, Sendonaris, Osband, Dulac-Arnold, Agapiou, Leibo, and Gruslys]{DQfD}
Hester, T., Vecerik, M., Pietquin, O., Lanctot, M., Schaul, T., Piot, B., Horgan, D., Quan, J., Sendonaris, A., Osband, I., Dulac-Arnold, G., Agapiou, J., Leibo, J., and Gruslys, A.
\newblock Deep q-learning from demonstrations.
\newblock \emph{Proceedings of the AAAI Conference on Artificial Intelligence}, 32\penalty0 (1), Apr. 2018.
\newblock \doi{10.1609/aaai.v32i1.11757}.

\bibitem[Hu et~al.(2024)Hu, Yang, Ye, Wu, Mai, Hu, Lv, Fan, Zhao, and Zhang]{BOORL}
Hu, H., Yang, Y., Ye, J., Wu, C., Mai, Z., Hu, Y., Lv, T., Fan, C., Zhao, Q., and Zhang, C.
\newblock {B}ayesian design principles for offline-to-online reinforcement learning.
\newblock In Salakhutdinov, R., Kolter, Z., Heller, K., Weller, A., Oliver, N., Scarlett, J., and Berkenkamp, F. (eds.), \emph{Proceedings of the 41st International Conference on Machine Learning}, volume 235 of \emph{Proceedings of Machine Learning Research}, pp.\  19491--19515. PMLR, 21--27 Jul 2024.

\bibitem[James et~al.(2020)James, Ma, Arrojo, and Davison]{RLBench}
James, S., Ma, Z., Arrojo, D.~R., and Davison, A.~J.
\newblock Rlbench: The robot learning benchmark learning environment.
\newblock \emph{IEEE Robotics and Automation Letters}, 5\penalty0 (2):\penalty0 3019--3026, 2020.
\newblock \doi{10.1109/LRA.2020.2974707}.

\bibitem[Kostrikov et~al.(2022)Kostrikov, Nair, and Levine]{IQL}
Kostrikov, I., Nair, A., and Levine, S.
\newblock Offline reinforcement learning with implicit q-learning.
\newblock In \emph{International Conference on Learning Representations}, 2022.
\newblock URL \url{https://openreview.net/forum?id=68n2s9ZJWF8}.

\bibitem[Kumar et~al.(2020)Kumar, Zhou, Tucker, and Levine]{CQL}
Kumar, A., Zhou, A., Tucker, G., and Levine, S.
\newblock Conservative q-learning for offline reinforcement learning.
\newblock In Larochelle, H., Ranzato, M., Hadsell, R., Balcan, M., and Lin, H. (eds.), \emph{Advances in Neural Information Processing Systems}, volume~33, pp.\  1179--1191. Curran Associates, Inc., 2020.

\bibitem[Lee et~al.(2022)Lee, Seo, Lee, Abbeel, and Shin]{Off2On}
Lee, S., Seo, Y., Lee, K., Abbeel, P., and Shin, J.
\newblock Offline-to-online reinforcement learning via balanced replay and pessimistic q-ensemble.
\newblock In Faust, A., Hsu, D., and Neumann, G. (eds.), \emph{Proceedings of the 5th Conference on Robot Learning}, volume 164 of \emph{Proceedings of Machine Learning Research}, pp.\  1702--1712. PMLR, 08--11 Nov 2022.

\bibitem[LEI et~al.(2024)LEI, He, Lu, Hu, Gao, and Xu]{UniO4}
LEI, K., He, Z., Lu, C., Hu, K., Gao, Y., and Xu, H.
\newblock Uni-o4: Unifying online and offline deep reinforcement learning with multi-step on-policy optimization.
\newblock In \emph{The Twelfth International Conference on Learning Representations}, 2024.
\newblock URL \url{https://openreview.net/forum?id=tbFBh3LMKi}.

\bibitem[Li et~al.(2022)Li, Liu, Yang, Peng, and Zhou]{CNN}
Li, Z., Liu, F., Yang, W., Peng, S., and Zhou, J.
\newblock A survey of convolutional neural networks: Analysis, applications, and prospects.
\newblock \emph{IEEE Transactions on Neural Networks and Learning Systems}, 33\penalty0 (12):\penalty0 6999--7019, 2022.
\newblock \doi{10.1109/TNNLS.2021.3084827}.

\bibitem[Lillicrap(2015)]{DDPG}
Lillicrap, T.
\newblock Continuous control with deep reinforcement learning.
\newblock \emph{arXiv preprint arXiv:1509.02971}, 2015.

\bibitem[Metz et~al.(2017)Metz, Ibarz, Jaitly, and Davidson]{SDQN}
Metz, L., Ibarz, J., Jaitly, N., and Davidson, J.
\newblock Discrete sequential prediction of continuous actions for deep rl.
\newblock \emph{arXiv preprint arXiv:1705.05035}, 2017.

\bibitem[Mnih et~al.(2015)Mnih, Kavukcuoglu, Silver, Rusu, Veness, Bellemare, Graves, Riedmiller, Fidjeland, Ostrovski, et~al.]{NatureDQN}
Mnih, V., Kavukcuoglu, K., Silver, D., Rusu, A.~A., Veness, J., Bellemare, M.~G., Graves, A., Riedmiller, M., Fidjeland, A.~K., Ostrovski, G., et~al.
\newblock Human-level control through deep reinforcement learning.
\newblock \emph{nature}, 518\penalty0 (7540):\penalty0 529--533, 2015.

\bibitem[Nair et~al.(2018)Nair, McGrew, Andrychowicz, Zaremba, and Abbeel]{HERDDPG}
Nair, A., McGrew, B., Andrychowicz, M., Zaremba, W., and Abbeel, P.
\newblock Overcoming exploration in reinforcement learning with demonstrations.
\newblock In \emph{2018 IEEE International Conference on Robotics and Automation (ICRA)}, pp.\  6292--6299, 2018.
\newblock \doi{10.1109/ICRA.2018.8463162}.

\bibitem[Nakamoto et~al.(2023)Nakamoto, Zhai, Singh, Sobol~Mark, Ma, Finn, Kumar, and Levine]{CalQL}
Nakamoto, M., Zhai, S., Singh, A., Sobol~Mark, M., Ma, Y., Finn, C., Kumar, A., and Levine, S.
\newblock Cal-ql: Calibrated offline rl pre-training for efficient online fine-tuning.
\newblock In Oh, A., Naumann, T., Globerson, A., Saenko, K., Hardt, M., and Levine, S. (eds.), \emph{Advances in Neural Information Processing Systems}, volume~36, pp.\  62244--62269. Curran Associates, Inc., 2023.

\bibitem[Rajeswaran et~al.(2018)Rajeswaran, Kumar, Gupta, Vezzani, Schulman, Todorov, and Levine]{DAPG}
Rajeswaran, A., Kumar, V., Gupta, A., Vezzani, G., Schulman, J., Todorov, E., and Levine, S.
\newblock Learning complex dexterous manipulation with deep reinforcement learning and demonstrations, 2018.

\bibitem[Rudner et~al.(2021)Rudner, Lu, Osborne, Gal, and Teh]{NPPAC}
Rudner, T. G.~J., Lu, C., Osborne, M.~A., Gal, Y., and Teh, Y.
\newblock On pathologies in kl-regularized reinforcement learning from expert demonstrations.
\newblock In Ranzato, M., Beygelzimer, A., Dauphin, Y., Liang, P., and Vaughan, J.~W. (eds.), \emph{Advances in Neural Information Processing Systems}, volume~34, pp.\  28376--28389. Curran Associates, Inc., 2021.

\bibitem[Schrittwieser et~al.(2020)Schrittwieser, Antonoglou, Hubert, Simonyan, Sifre, Schmitt, Guez, Lockhart, Hassabis, Graepel, et~al.]{AlphaZero}
Schrittwieser, J., Antonoglou, I., Hubert, T., Simonyan, K., Sifre, L., Schmitt, S., Guez, A., Lockhart, E., Hassabis, D., Graepel, T., et~al.
\newblock Mastering atari, go, chess and shogi by planning with a learned model.
\newblock \emph{Nature}, 588\penalty0 (7839):\penalty0 604--609, 2020.

\bibitem[Schulman et~al.(2017)Schulman, Wolski, Dhariwal, Radford, and Klimov]{PPO}
Schulman, J., Wolski, F., Dhariwal, P., Radford, A., and Klimov, O.
\newblock Proximal policy optimization algorithms.
\newblock \emph{arXiv preprint arXiv:1707.06347}, 2017.

\bibitem[Seo \& Abbeel(2024)Seo and Abbeel]{CQNAS}
Seo, Y. and Abbeel, P.
\newblock Reinforcement learning with action sequence for data-efficient robot learning.
\newblock \emph{arXiv preprint arXiv:2411.12155}, 2024.

\bibitem[Seo et~al.(2024)Seo, Uru{\c{c}}, and James]{CQN}
Seo, Y., Uru{\c{c}}, J., and James, S.
\newblock Continuous control with coarse-to-fine reinforcement learning.
\newblock In \emph{8th Annual Conference on Robot Learning}, 2024.
\newblock URL \url{https://openreview.net/forum?id=WjDR48cL3O}.

\bibitem[Seyde et~al.(2023)Seyde, Werner, Schwarting, Gilitschenski, Riedmiller, Rus, and Wulfmeier]{DecQN}
Seyde, T., Werner, P., Schwarting, W., Gilitschenski, I., Riedmiller, M., Rus, D., and Wulfmeier, M.
\newblock Solving continuous control via q-learning.
\newblock In \emph{The Eleventh International Conference on Learning Representations}, 2023.
\newblock URL \url{https://openreview.net/forum?id=U5XOGxAgccS}.

\bibitem[Seyde et~al.(2024)Seyde, Werner, Schwarting, Wulfmeier, and Rus]{GQN}
Seyde, T., Werner, P., Schwarting, W., Wulfmeier, M., and Rus, D.
\newblock Growing {Q}-networks: {S}olving continuous control tasks with adaptive control resolution.
\newblock In Abate, A., Cannon, M., Margellos, K., and Papachristodoulou, A. (eds.), \emph{Proceedings of the 6th Annual Learning for Dynamics Control Conference}, volume 242 of \emph{Proceedings of Machine Learning Research}, pp.\  1646--1661. PMLR, 15--17 Jul 2024.

\bibitem[Silver et~al.(2017)Silver, Schrittwieser, Simonyan, Antonoglou, Huang, Guez, Hubert, Baker, Lai, Bolton, et~al.]{AlphaGo}
Silver, D., Schrittwieser, J., Simonyan, K., Antonoglou, I., Huang, A., Guez, A., Hubert, T., Baker, L., Lai, M., Bolton, A., et~al.
\newblock Mastering the game of go without human knowledge.
\newblock \emph{nature}, 550\penalty0 (7676):\penalty0 354--359, 2017.

\bibitem[Tang \& Agrawal(2020)Tang and Agrawal]{DiscretePPO}
Tang, Y. and Agrawal, S.
\newblock Discretizing continuous action space for on-policy optimization.
\newblock \emph{Proceedings of the AAAI Conference on Artificial Intelligence}, 34\penalty0 (04):\penalty0 5981--5988, Apr. 2020.
\newblock \doi{10.1609/aaai.v34i04.6059}.

\bibitem[Tavakoli et~al.(2018)Tavakoli, Pardo, and Kormushev]{BDQ}
Tavakoli, A., Pardo, F., and Kormushev, P.
\newblock Action branching architectures for deep reinforcement learning.
\newblock \emph{Proceedings of the AAAI Conference on Artificial Intelligence}, 32\penalty0 (1), Apr. 2018.
\newblock \doi{10.1609/aaai.v32i1.11798}.

\bibitem[Tavakoli et~al.(2021)Tavakoli, Fatemi, and Kormushev]{HGQN}
Tavakoli, A., Fatemi, M., and Kormushev, P.
\newblock Learning to represent action values as a hypergraph on the action vertices.
\newblock In \emph{International Conference on Learning Representations}, 2021.
\newblock URL \url{https://openreview.net/forum?id=Xv_s64FiXTv}.

\bibitem[van Hasselt et~al.(2016)van Hasselt, Guez, and Silver]{DoubleQ}
van Hasselt, H., Guez, A., and Silver, D.
\newblock Deep reinforcement learning with double q-learning.
\newblock \emph{Proceedings of the AAAI Conference on Artificial Intelligence}, 30\penalty0 (1), Mar. 2016.
\newblock \doi{10.1609/aaai.v30i1.10295}.

\bibitem[Xu et~al.(2022)Xu, Zhan, Yin, and Qin]{DWBC}
Xu, H., Zhan, X., Yin, H., and Qin, H.
\newblock Discriminator-weighted offline imitation learning from suboptimal demonstrations.
\newblock In Chaudhuri, K., Jegelka, S., Song, L., Szepesvari, C., Niu, G., and Sabato, S. (eds.), \emph{Proceedings of the 39th International Conference on Machine Learning}, volume 162 of \emph{Proceedings of Machine Learning Research}, pp.\  24725--24742. PMLR, 17--23 Jul 2022.

\bibitem[Yan et~al.(2015)Yan, Zhang, Piramuthu, Jagadeesh, DeCoste, Di, and Yu]{HD-CNN}
Yan, Z., Zhang, H., Piramuthu, R., Jagadeesh, V., DeCoste, D., Di, W., and Yu, Y.
\newblock Hd-cnn: Hierarchical deep convolutional neural networks for large scale visual recognition.
\newblock In \emph{Proceedings of the IEEE International Conference on Computer Vision (ICCV)}, December 2015.

\bibitem[Yarats et~al.(2022)Yarats, Fergus, Lazaric, and Pinto]{DrQv2}
Yarats, D., Fergus, R., Lazaric, A., and Pinto, L.
\newblock Mastering visual continuous control: Improved data-augmented reinforcement learning.
\newblock In \emph{International Conference on Learning Representations}, 2022.
\newblock URL \url{https://openreview.net/forum?id=_SJ-_yyes8}.

\bibitem[Yu et~al.(2022)Yu, Velu, Vinitsky, Gao, Wang, Bayen, and WU]{MAPPO}
Yu, C., Velu, A., Vinitsky, E., Gao, J., Wang, Y., Bayen, A., and WU, Y.
\newblock The surprising effectiveness of ppo in cooperative multi-agent games.
\newblock In Koyejo, S., Mohamed, S., Agarwal, A., Belgrave, D., Cho, K., and Oh, A. (eds.), \emph{Advances in Neural Information Processing Systems}, volume~35, pp.\  24611--24624. Curran Associates, Inc., 2022.

\bibitem[Zhang et~al.(2023)Zhang, Xu, and Yu]{PEX}
Zhang, H., Xu, W., and Yu, H.
\newblock Policy expansion for bridging offline-to-online reinforcement learning.
\newblock In \emph{The Eleventh International Conference on Learning Representations}, 2023.
\newblock URL \url{https://openreview.net/forum?id=-Y34L45JR6z}.

\bibitem[Zhao et~al.(2023)Zhao, Kumar, Levine, and Finn]{ACT}
Zhao, T.~Z., Kumar, V., Levine, S., and Finn, C.
\newblock Learning fine-grained bimanual manipulation with low-cost hardware, 2023.

\bibitem[Ziebart et~al.(2008)Ziebart, Maas, Bagnell, and Dey]{MaxEntIRL}
Ziebart, B.~D., Maas, A., Bagnell, J.~A., and Dey, A.~K.
\newblock Maximum entropy inverse reinforcement learning.
\newblock In \emph{Proceedings of the 23rd National Conference on Artificial Intelligence - Volume 3}, AAAI'08, pp.\  1433–1438. AAAI Press, 2008.
\newblock ISBN 9781577353683.

\end{thebibliography}
\bibliographystyle{icml2025}

\newpage
\appendix
\onecolumn

\section{Proof of Theorem~\ref{the:med-sum}}
\label{sec:app-proof}

First, we express the policy using conditional probability, and then replace it with Eq.~(\ref{eq:med-dsa}).
\begin{equation}
\begin{aligned}
    \pi (\mathbf{a} | \mathbf{s})
    &= \prod_{d=1}^D \pi (a^d | \mathbf{s}, \mathbf{a}^{-d} ) \\
    &= \prod_{d=1}^D \frac
    {exp \left( \frac{1}{\alpha} A^d(\mathbf{s}, \mathbf{a}^{-d}, a^d) \right)}
    {Z(\mathbf{s}, \mathbf{a}^{-d})} \\
    &= \frac
    {\prod_{d=1}^D exp \left(\frac{1}{\alpha} A^d(\mathbf{s}, \mathbf{a}^{-d}, a^d) \right) }
    {\prod_{d=1}^D Z^d(\mathbf{s}, \mathbf{a}^{-d})} \\
    &= \frac
    {exp \left( \frac{1}{\alpha} \sum_{d=1}^D A^d(\mathbf{s}, \mathbf{a}^{-d}, a^d)\right)}
    {\prod_{d=1}^D Z^d(\mathbf{s}, \mathbf{a}^{-d})} \\
\end{aligned}
\end{equation}
We can then apply Eq.~(\ref{eq:med-dsa-cond}), resulting in
\begin{equation}
    \pi (\mathbf{a} | \mathbf{s})
    = exp \left( \frac{1}{\alpha} \sum_{d=1}^D A^d(\mathbf{s}, \mathbf{a}^{-d}, a^d)\right)
\end{equation}
Recall that the policy $\pi (\mathbf{a} | \mathbf{s})$ can be represented using the soft advantage as shown in Eq.~(\ref{eq:med-sa-pi}). Therefore, we have
\begin{equation}
    \sum_{d=1}^D A^d(\mathbf{s}, \mathbf{a}^{-d}, a^d)
     = A(\mathbf{s}, \mathbf{a})
\end{equation}

\section{Implementation Details}

\subsection{Action Selection}
As illustrated in Algorithm~\ref{pse:med-alg}, the action selection process receives inputs from $A_{\theta_1}$ and $A_{\theta_2}$ and produces $\mathbf{a}_t$. 
Eq.~(\ref{eq:med-sa-pi}) and Eq.~(\ref{eq:med-alg-cons}) describe the action selection process utilizing a single soft advantage network. 
To leverage the benefits of a double network, we employ two advantage networks to generate more precise actions. 
This process is detailed in Algorithm~\ref{pse:med-alg-act}.

\begin{algorithm}[h]
    \caption{ARSQ Action Selection with Double Q Network}
    \label{pse:med-alg-act}
\begin{algorithmic}
    \STATE \textbf{Input:} parameter $\theta_{1, 2}$ for $A^{\theta_i}$, state $\mathbf{s}_t$
    \STATE \textbf{Output:} action $\mathbf{a}_t$
    \STATE Initialize output action $\mathbf{a}_t = \emptyset$
    \FOR{each action dimension $d$}
        \STATE Compute $A^{d, \theta_i}(\mathbf{s}_t, \mathbf{a}_t, a^d )$ for each $a^d$ (\ref{eq:med-alg-cons})
        \STATE Compute $A^{d}(a^d ) = \min_i A^{d, \theta_i}(\mathbf{s}_t, \mathbf{a}_t, a^d )$
        \STATE Compute $\tilde{\pi}^d(a^d)= \text{exp} \left( \frac{1}{\alpha} A^{d}(a^d ) \right)$ (\ref{eq:med-sa-pi})
        \STATE Normalize $\tilde{\pi}^d$ by $\pi^d(a^d)=\frac{\tilde{\pi}^d (a^d)}{\sum_{a^{d'}} \tilde{\pi}^d(a^{d'})} $
        \STATE Sample discrete action at dimension $d$ with $\pi^d(a^d)$
        \STATE Append action $\mathbf{a}_t = \mathbf{a}_t \cup \{ a^d \}$
    \ENDFOR
\end{algorithmic}
\end{algorithm}

\subsection{Variant of Behavior Cloning Objective}
As discussed in Sec.~\ref{sec:med-alg}, we incorporate an behavior cloning objective to effectively utilize offline demonstration data during online training, as defined in Eq.~(\ref{eq:med-alg-bc}). 

Following prior works \cite{CQL}, we also employ a variant of this objective, expressed as:  
\begin{equation}
    \mathcal{L}_{BC-v}^{d} = \max \left( 
    \text{log} \sum_{a^d \neq a^d_e}
    \text{exp} \left( A^{d, \theta_i}(\mathbf{s}, \mathbf{a}^{-d}_e, a^{d}) \right) - A^{d, \theta_i}(\mathbf{s}, \mathbf{a}^{-d}_e, a^{d}_e), C_m \right)
\end{equation}

where $a^d_e$ is the expert action and $C_m$ is a predefined margin constant.

We observe that this variant objective achieves better performance in scenarios where action modes are concentrated, such as in the \textit{medium} and \textit{medium-expert} series of datasets in D4RL.
Consequently, we adopt this variant objective when working with such datasets.

\subsection{Network Architecture}

In RLBench tasks, observations consist of a combination of RGB images and low-dimensional states. 
To compute the dimensional soft advantage for a given dimension, we first input the RGB images and low-dimensional states into a Convolutional Neural Network (CNN) \cite{CNN} encoder and a Multi-Layer Perceptron (MLP) \cite{MLP} encoder, respectively, to extract feature representations. 
These representations are then used to predict the soft value. 
Concurrently, the feature representations are combined with actions from previous dimensions and coarse-to-fine levels to create auto-regressive conditioning. 
An MLP-based shared backbone and output head are then utilized to determine the dimensional soft advantage for the given dimension.

In D4RL tasks, observations consist solely of low-dimensional states, and feature representations are derived directly from these states.

\subsection{Hyper-parameters}
\begin{table}[ht]
\centering
\caption{Typical hyper-parameters of ARSQ in D4RL and RLBench.}
\begin{tabular}{lll}
    \toprule
    Hyper-parameter & D4RL & RLBench \\
    \midrule
    Image resolution & / & $84 \times 84 \times 3$ \\
    Image augmentation & / & RandomShift \\
    Frame stack & 1 & 8 \\
    \midrule
    CNN - Encoder & / & Conv (c=[32, 64, 128, 256], s=2, p=1) \\
    Backbone & Linear (512, 512, 512) & Linear (512, 512, 512, bias=False) \\
    Output Head Layers & 1 & 1 \\
    Activation & Tanh & SiLU \& LayerNorm \\
    \midrule
    Coarse-to-fine Levels & 2 & 3 \\
    Coarse-to-fine Bins & 7 & 5 \\
    \midrule
    Batch Size & 512 & 512 \\
    Optimizer & Adam & AdamW (weight decay = 0.1) \\
    Learning Rate & 3e-4 & 5e-5 \\
    Temperature Coefficient $\alpha$ & 0.01 & 0.001 \\
    Target Critic Update Ratio ($\tau$) & 0.005 & 0.02 \\
    BC Margin $C_m$ & -1 & -0.01 \\
    Action Roll-out Network & Current & Target \\
    \bottomrule
\end{tabular}

\label{tab:alg-hyperparam}
\end{table}

The hyperparameters of ARSQ are presented in Table~\ref{tab:alg-hyperparam}. 
We provide the typical hyperparameters for ARSQ in D4RL (\textit{hopper-medium}) and RLBench (\textit{Open Oven}). 
In RLBench, ARSQ employs RandomShift \cite{DrQv2} for image augmentation. 
Additionally, ARSQ utilizes SiLU \cite{SiLU} and LayerNorm \cite{LayerNorm} as activation functions in RLBench.

\section{Experiment Setup}

\subsection{Motivating Example Setup}
\label{sec:app-example}

As introduced in Sec.~\ref{sec:intro} and illustrated in Fig.~\ref{fig:intro-case}, we consider a motivating example to demonstrate the impact of Q decomposition on policy training. 
The dataset is depicted in Fig.~\ref{fig:intro-toy-env}, with each point to be a data point in the dataset.
The color of the data points indicates the reward of the data point.
To illustrate the Q function of value-based RL algorithms, we first discretize the action space with $2$ bins in each action dimension.

\begin{itemize}
    \item Q function given by independent action decomposition is an example of DecQN \cite{DecQN}, as well as in CQN \cite{CQN}, which features just a single coarse-to-fine level. 
    In this setting, we employ separate tabular Q functions, $Q(s, a_1)$ and $Q(s, a_2)$, for action dimension 1 and action dimension 2. 
    The Q function is learned by gradient descent.
    \item For the Q function obtained through auto-regressive action decomposition, we employ both tabular soft advantage functions, $A^1(s, a_1)$ and $A^2(s, a_1, a_2)$ for action dimension 1 and action dimension 2, and a tabular soft value function $V_{\text{soft}}(s)$. 
    The Q value reported in Fig.~\ref{fig:intro-toy-ar} is a sum of the soft value and the dimensional soft advantage of the corresponding dimensions, i.e., $Q(s, a_1, a_2) = V_{\text{soft}}(s) + A^1(s, a_1) + A^2(s, a_1, a_2)$. The soft advantage functions and the soft value function are simultaneously learned through gradient descent.
\end{itemize}

\subsection{Environment and Dataset}
\label{sec:app-env}

\paragraph{D4RL Gym Environment.}
D4RL \cite{D4RL} provides datasets for various tasks to evaluate the performance of reinforcement learning. 
In this context, we use 3 Gym Locomotion tasks and datasets from D4RL to assess the performance of ARSQ and other baselines. 
These tasks are illustrated in Fig.~\ref{fig:app-env-d4rl}. The agent's observations include its states, such as the angle and velocity of each rotor. 
The agent's actions consist of torques applied between the robot's links, constrained within the range of $(-1, 1)$. 
The reward is dense, offering incentives for task completion and survival, while penalizing excessive energy-consuming actions.

\begin{figure}[h]
    \centering
    \includegraphics[width=0.8\linewidth]{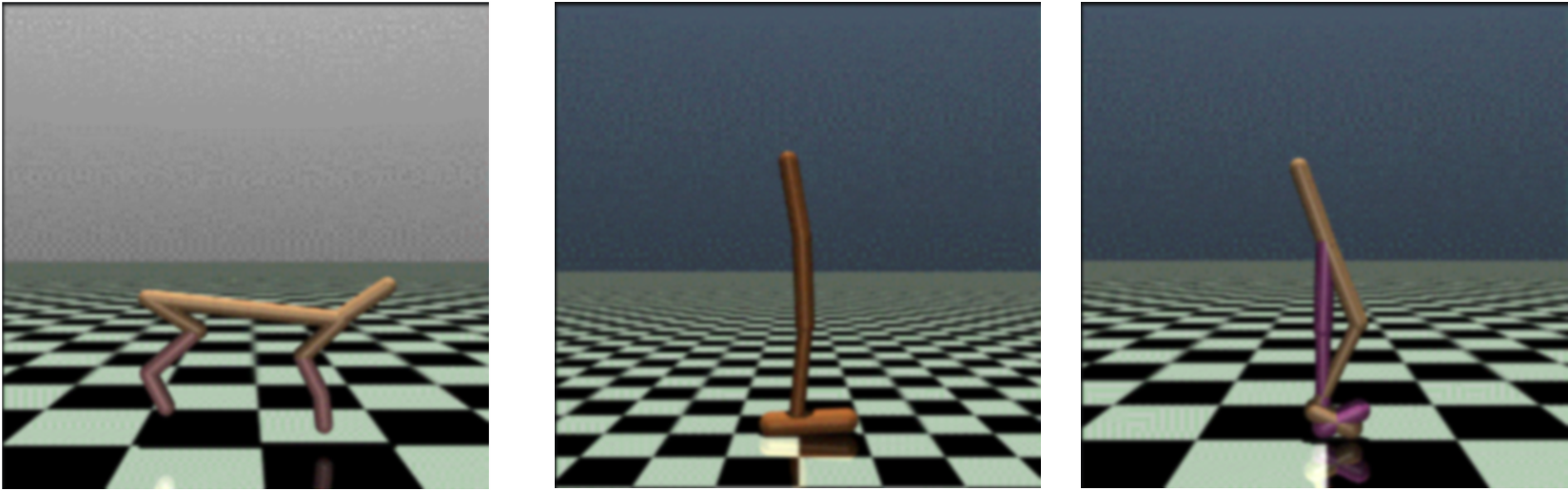}
    \caption{D4RL Gym tasks used in experiment.}
    \label{fig:app-env-d4rl}
\end{figure}

\paragraph{D4RL Dataset.}
In D4RL, we use the \textit{medium-replay}, \textit{medium}, and \textit{medium-expert} datasets for tasks involving \textit{half-cheetah}, \textit{hopper}, and \textit{walker2d}. 
In Section~\ref{sec:exp-d4rl}, to examine the impact of dataset quality, we rank trajectories based on episode returns within these nine datasets. 
Specifically, we compute the total reward for each data chunk within each dataset. 
We then rank these data chunks and select the top, middle, and bottom $30\%$ accordingly.
This is akin to rank trajectories but is easier to handle.

To better demonstrate the suboptimal nature of the datasets, we plot a histogram of the data chunk rewards, as shown in Fig.~\ref{fig:app-env-data-d4rl}.

\begin{figure}[htbp]
    \centering
    \includegraphics[width=0.75\linewidth]{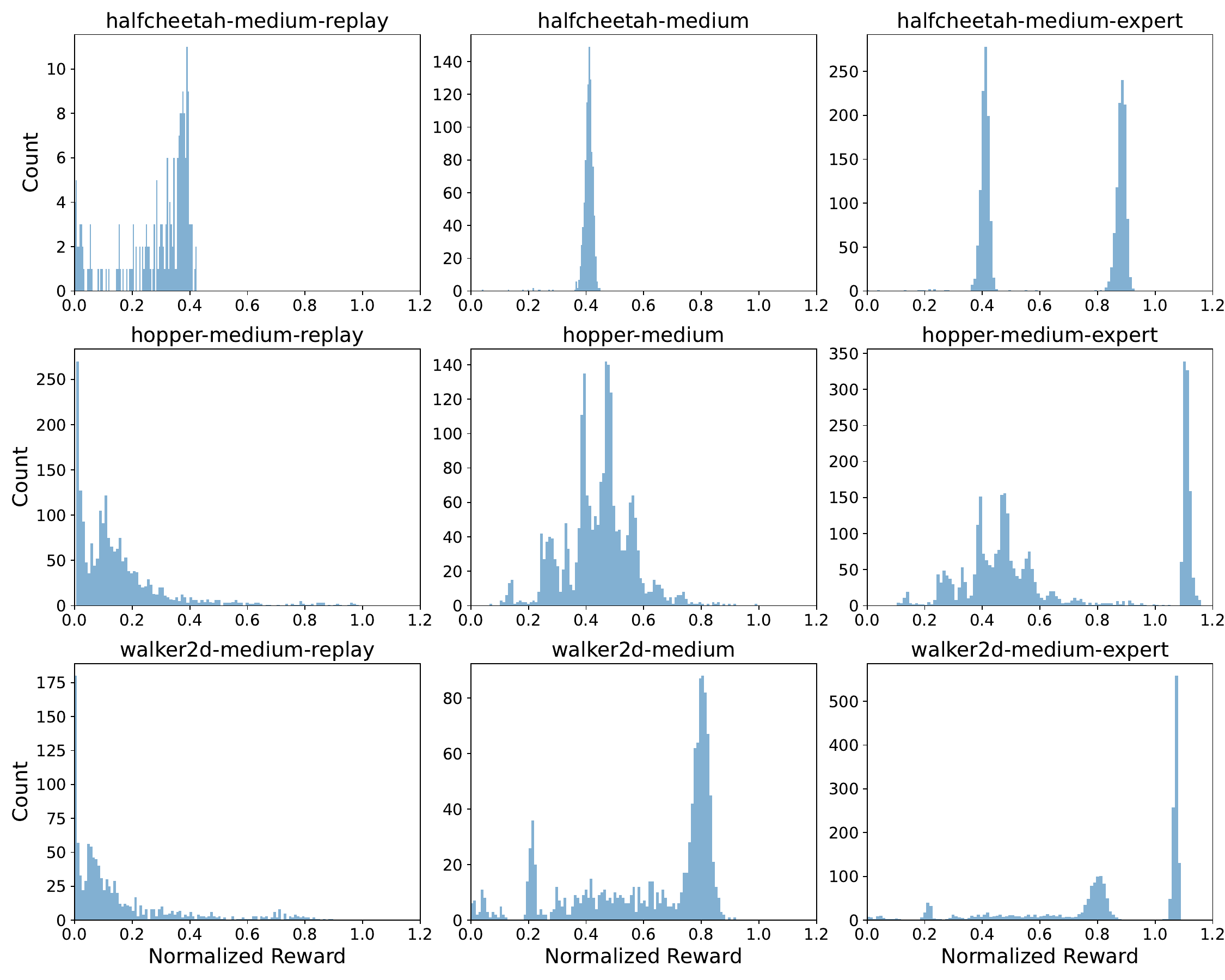}
    \caption{Histogram of reward in D4RL datasets.}
    \label{fig:app-env-data-d4rl}
\end{figure}

\paragraph{RLBench Environment.}
RLBench \cite{RLBench} serves as a benchmark and learning environment for robot control. 
We have selected 20 tasks from RLBench and present results for 6 of them in Sec.~\ref{sec:exp}. 
An illustration of the environment can be seen in Fig.~\ref{fig:app-env-rlb}. 
The input consists of RGB images with a resolution of 84 × 84, captured from four camera angles: front, wrist, left-shoulder, and right-shoulder, along with a history of the past seven observations. 
The output specifies the change in joint angles at each time step, utilizing the delta JointPosition mode provided by RLBench. 
In our experiments, we use a binary sparse reward system (0 or 1), which is awarded only at the final timestamp of an episode to indicate task success.

\begin{figure}[h]
    \centering
    \includegraphics[width=0.8\linewidth]{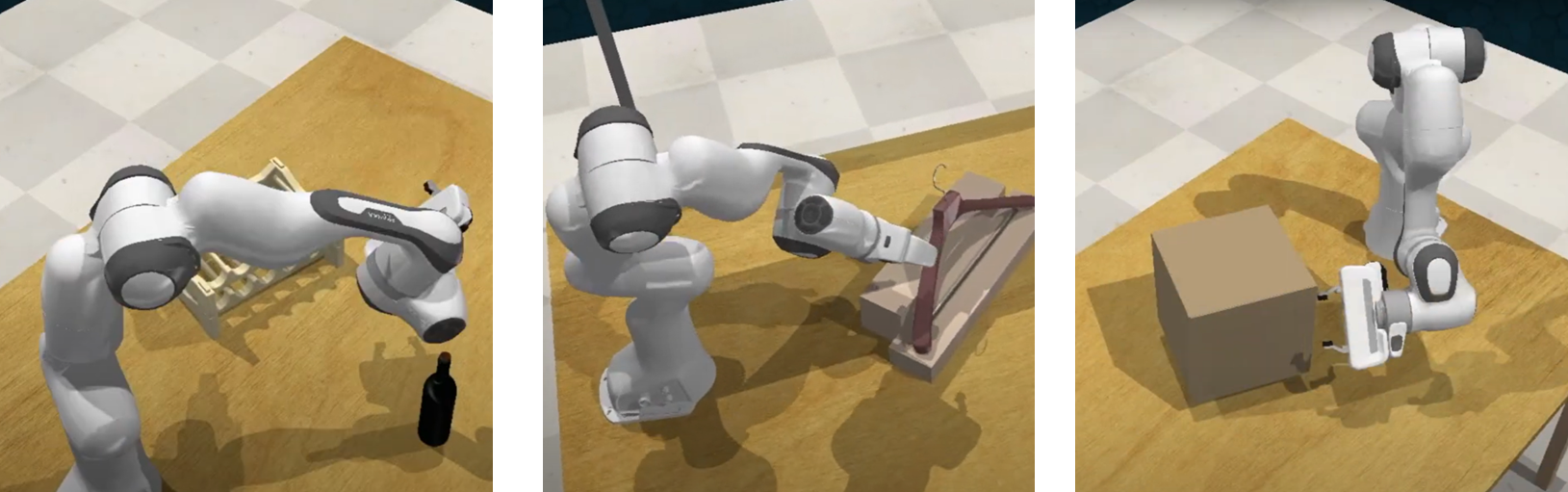}
    \caption{Example of RLBench tasks used in experiment.}
    \label{fig:app-env-rlb}
\end{figure}

\subsection{Baselines and Evaluation Details}
\label{sec:app-baseline}

\paragraph{Main Results Baselines.}

As mentioned in Sec.~\ref{sec:exp-d4rl}, within D4RL, we utilize the implementation from \cite{CQN} and modify its CNN-based encoder to an MLP-based encoder as the CQN baseline. 
The BC baseline originates from CQN but operates with the RL learning objective turned off and without any online environment interaction.

In RLBench, we adopt DrQ-v2+, an optimized variant of DrQ-v2 proposed by \cite{CQN}, as our baseline.
DrQ-v2+ incorporates several optimization strategies introduced in the CQN algorithm.
Specifically, compared to DrQ-v2, DrQ-v2+ employs a distributional critic instead of a standard critic network, utilizes an exploration strategy with small Gaussian noise, and features optimized network architectures and hyperparameters tailored for RLBench tasks.
These enhancements strengthen DrQ-v2+'s performance, making it a more robust baseline than DrQ-v2.
Additionally, DrQ-v2+ has been open-sourced by \cite{CQN}.

\paragraph{Ablation Study Baselines.}
As mentioned in Sec.~\ref{sec:exp-abl}, we utilize the \textit{Separate} and \textit{Level Shared} backbone baselines for an ablation study to explore the effectiveness of the shared backbone in the advantage network. 
The network architectures of these two baselines are illustrated in Fig.~\ref{fig:app-baseline-abl-nn-s} and Fig.~\ref{fig:app-baseline-abl-nn-ls}.

\begin{figure}[h]
    \centering
    \includegraphics[width=0.7\linewidth]{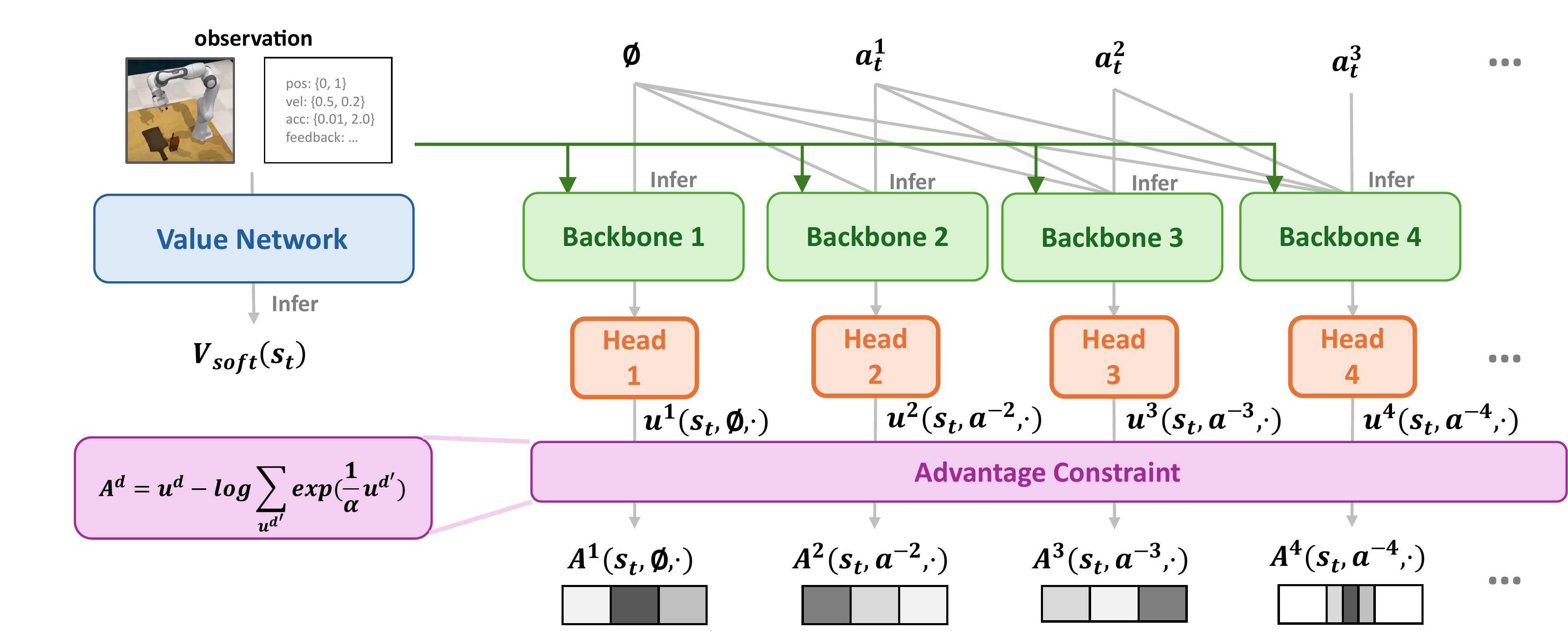}
    \caption{Network architecture of \textit{Separate} backbone baseline in ablation study.}
    \label{fig:app-baseline-abl-nn-s}
\end{figure}

\begin{figure}[h]
    \centering
    \includegraphics[width=0.7\linewidth]{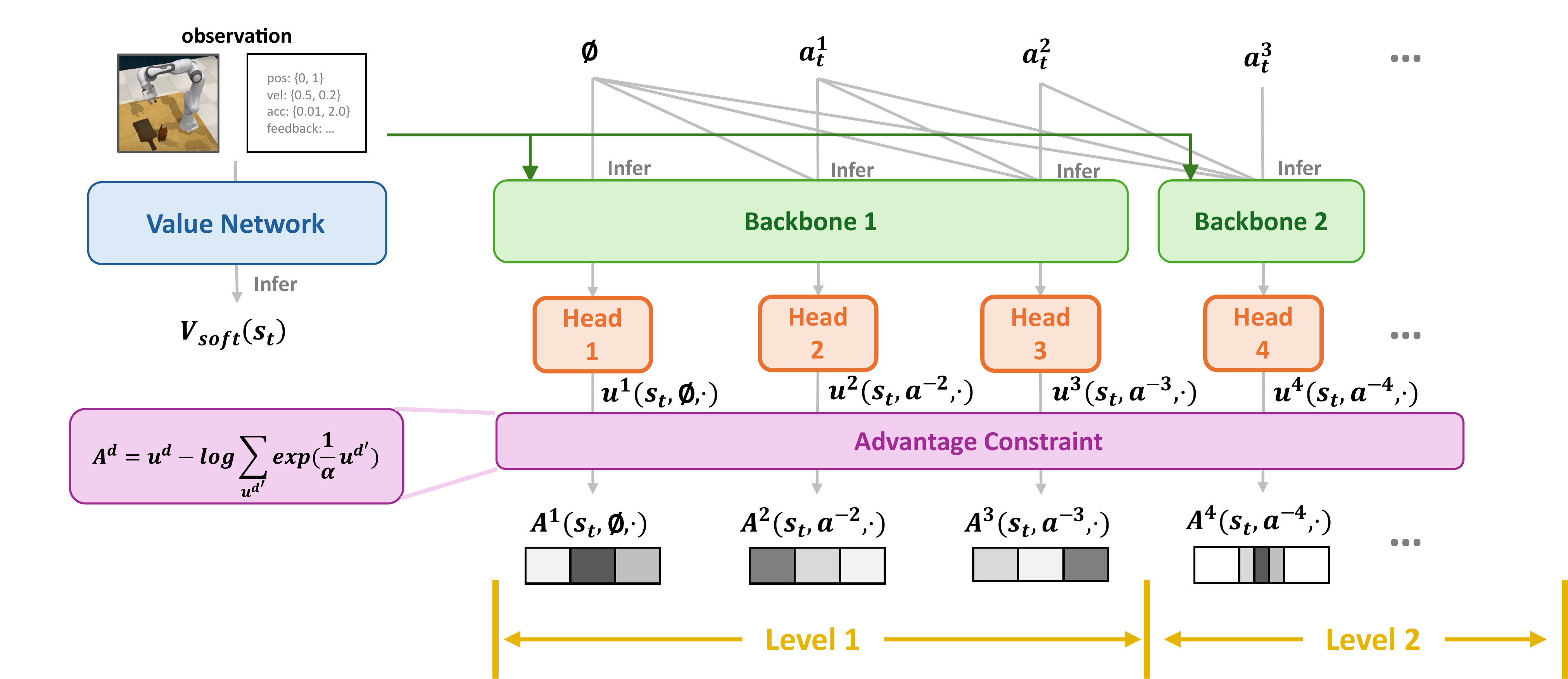}
    \caption{Network architecture of \textit{Level Shared} backbone baseline in ablation study.}
    \label{fig:app-baseline-abl-nn-ls}
\end{figure}

\section{Additional Results}
\label{sec:app-exp}

\paragraph{Sensitivity of Temperature Coefficient $\alpha$.}
Our methods are derived from Soft Q-learning, which aims to achieve a maximum-entropy policy.
The temperature coefficient $\alpha$ in Eq.~(\ref{eq:pre-soft-obj}) affects the balance between maximizing policy entropy and the reward from the environment. 
We conducted experiments to examine how varying $\alpha$ impacts policy learning. 

\begin{figure}[h]
    \centering
    \begin{subfigure}[t]{0.30\textwidth}
        \centering
        \includegraphics[width=\textwidth]{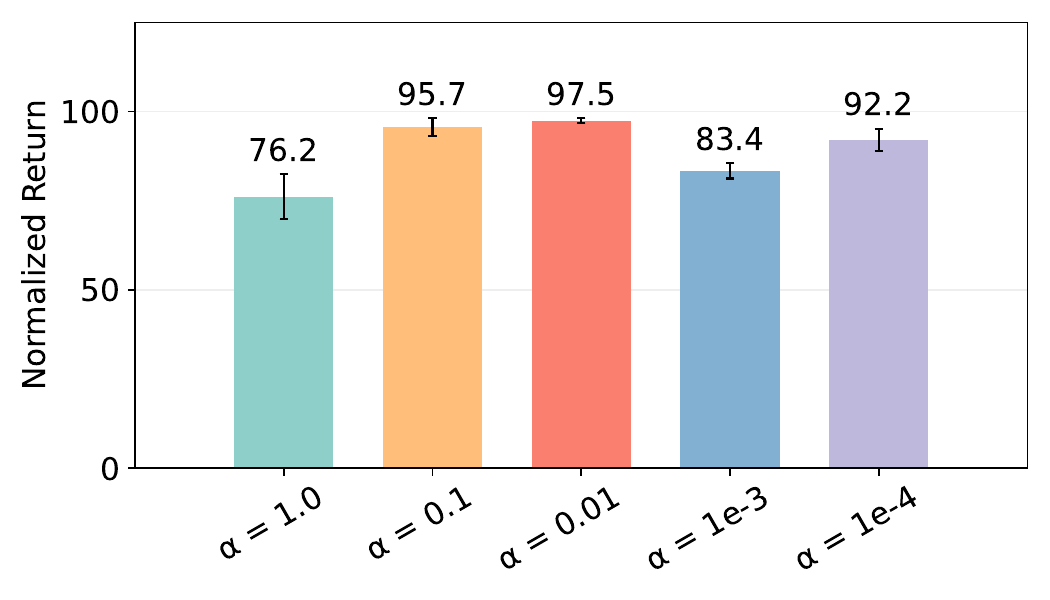}
        \label{fig:exp-abl-alpha-d4rl}
    \end{subfigure}
    \hspace{0.01\textwidth}
    \begin{subfigure}[t]{0.30\textwidth}
        \centering
        \includegraphics[width=\textwidth]{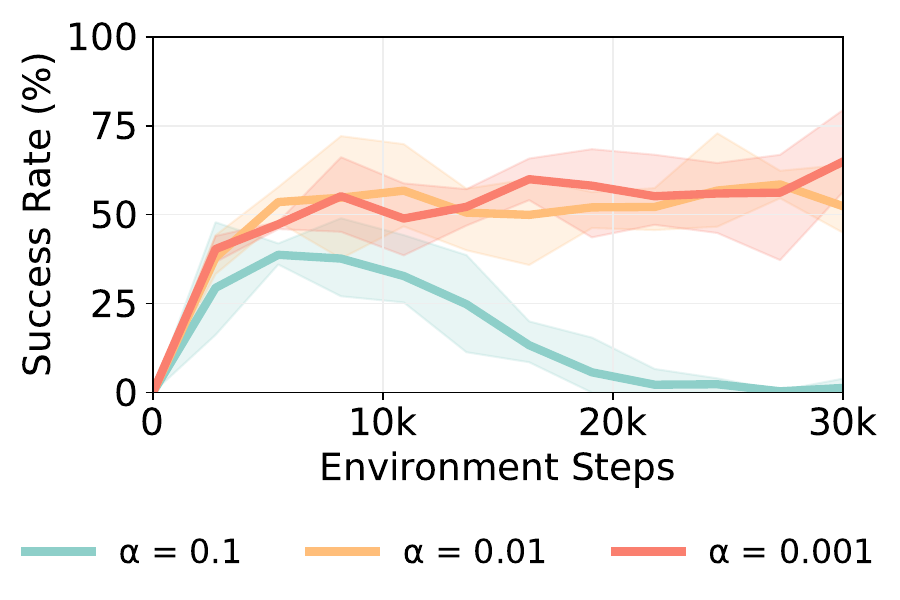}
        \label{fig:exp-abl-alpha-rlb}
    \end{subfigure}
    \caption{Sensitivity of temperature coefficient $\alpha$, evaluated on \textit{hopper-medium} from D4RL and \textit{Open Oven} from RLBench over three random seeds.}
    \label{fig:exp-abl-alpha}
\end{figure}

As shown in Fig.~\ref{fig:exp-abl-alpha}, a very high $\alpha$ results in reduced performance and unstable training, whereas a very low $\alpha$ also hampers policy improvement by restricting exploration.

\paragraph{Training Curves of D4RL Main Results.}

In Section~\ref{sec:exp-d4rl}, we discuss the converged performance of ARSQ, CQN, and BC.
The training curves for each task are shown in Fig.~\ref{fig:exp-d4rl-main-curve}.
ARSQ converges after approximately 25,000 to 50,000 environment steps and generally outperforms the CQN and BC baselines across most tasks.
This further demonstrates ARSQ's strength in managing suboptimal data.

\begin{figure}[h]
    \centering
    \includegraphics[width=0.75\linewidth]{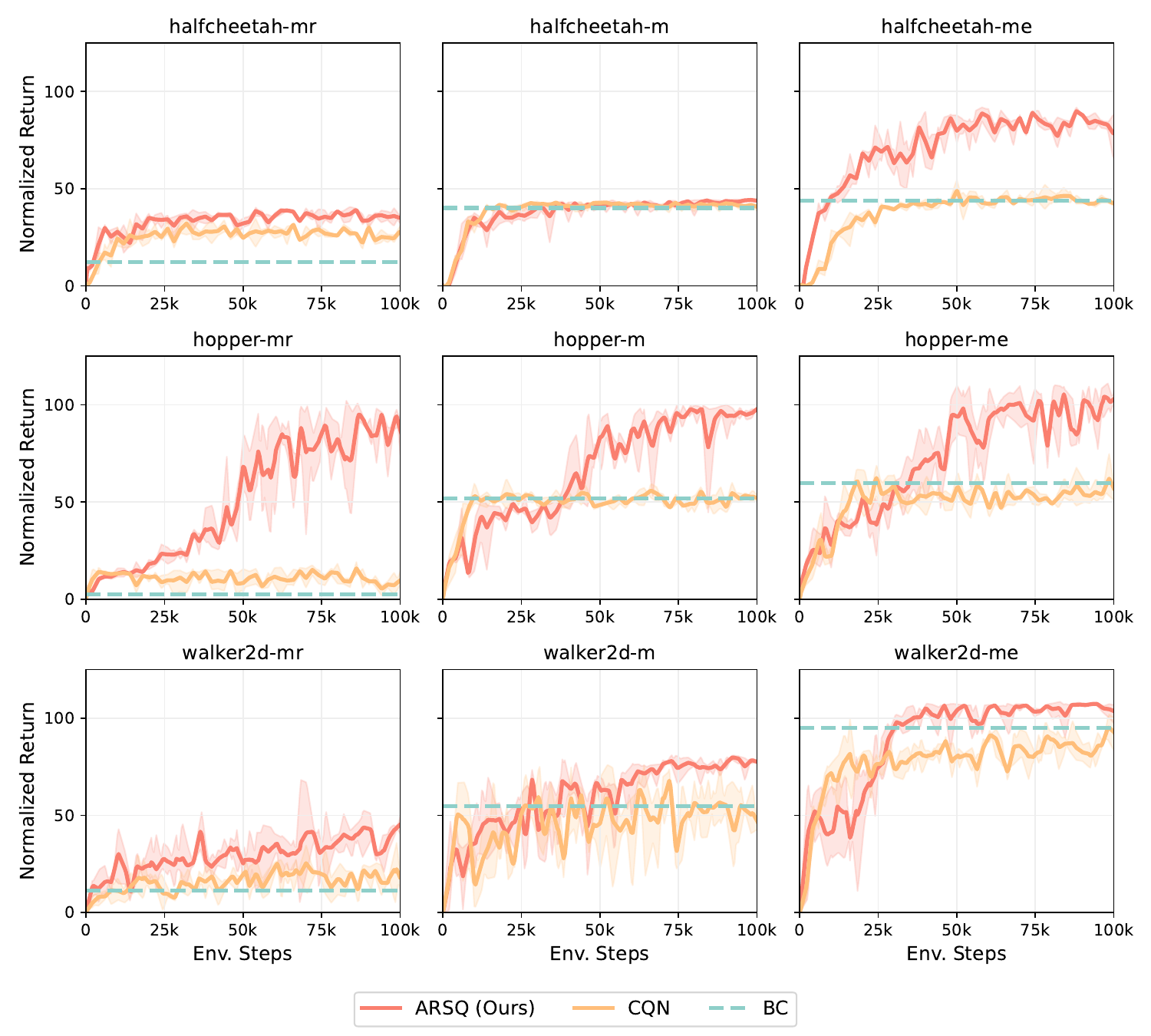}
    \caption{Training curves of D4RL main results, evaluated over three random seeds.}
    \label{fig:exp-d4rl-main-curve}
\end{figure}

\paragraph{D4RL Results per Task for Different Demonstration Quality.}
In Sec.~\ref{sec:exp-d4rl}, we present the D4RL results, averaged over all 9 datasets, based on varying demonstration quality. 
The results for each task are illustrated in Fig.~\ref{fig:exp-d4rl-quality}. 
ARSQ consistently outperforms the CQN and BC baselines in nearly every task, demonstrating its ability to maintain stable performance across datasets of varying quality.

\begin{figure}[h]
    \centering
    \includegraphics[width=0.9\linewidth]{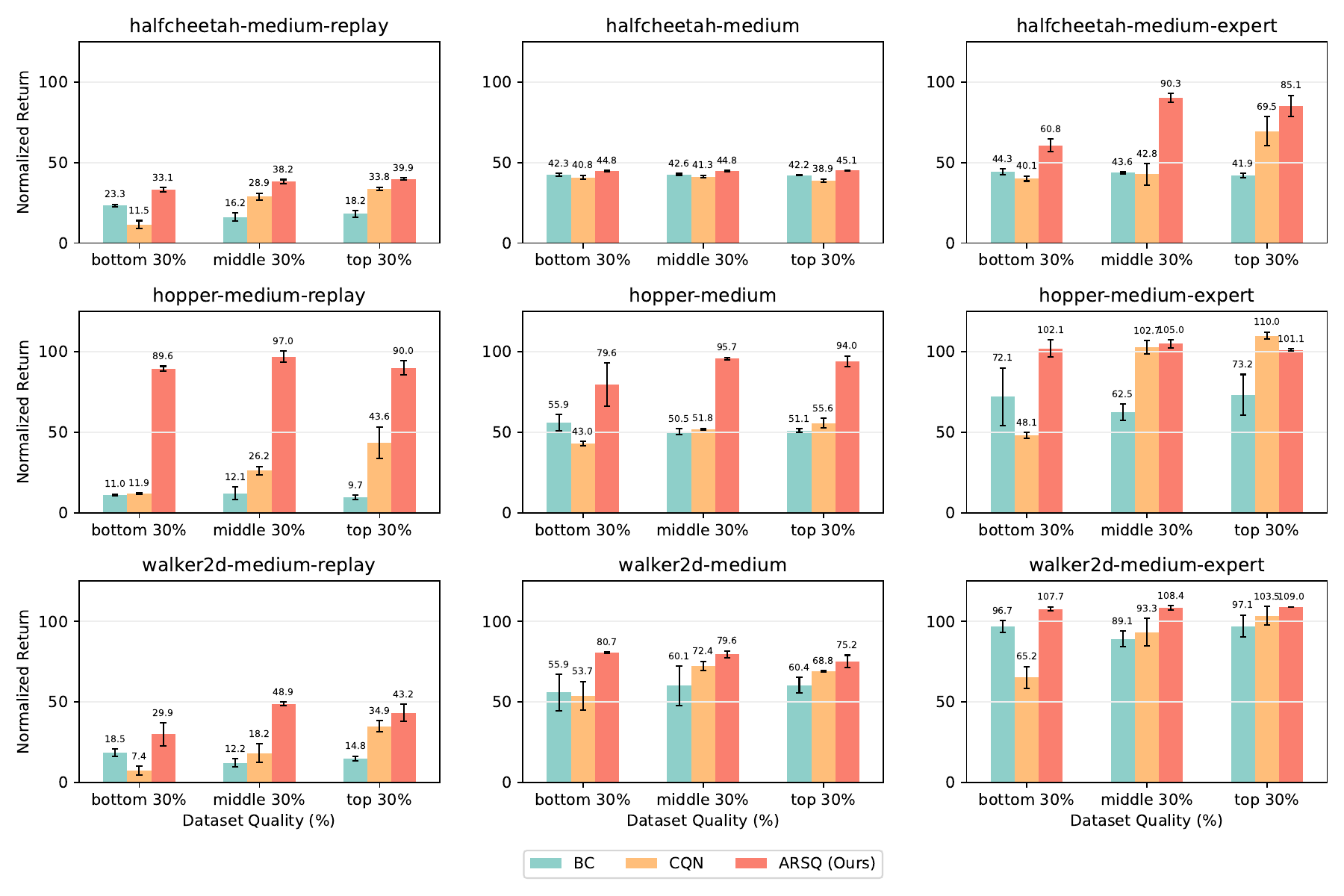}
    \caption{D4RL results per task on different demonstration quality, evaluated over three random seeds.}
    \label{fig:exp-d4rl-quality}
\end{figure}

\paragraph{RLBench Results in All 20 Tasks.}
In Sec.~\ref{sec:exp-rlb}, we present results for six selected tasks from RLBench. 
The complete results for all 20 tasks are displayed in Fig.~\ref{fig:exp-rlb-task-all}. 
These results indicate that ARSQ performs comparably or better across these tasks, showcasing its ability to learn effectively even when the data collected online is not optimal.

\begin{figure}[h]
    \centering
    \includegraphics[width=0.75\linewidth]{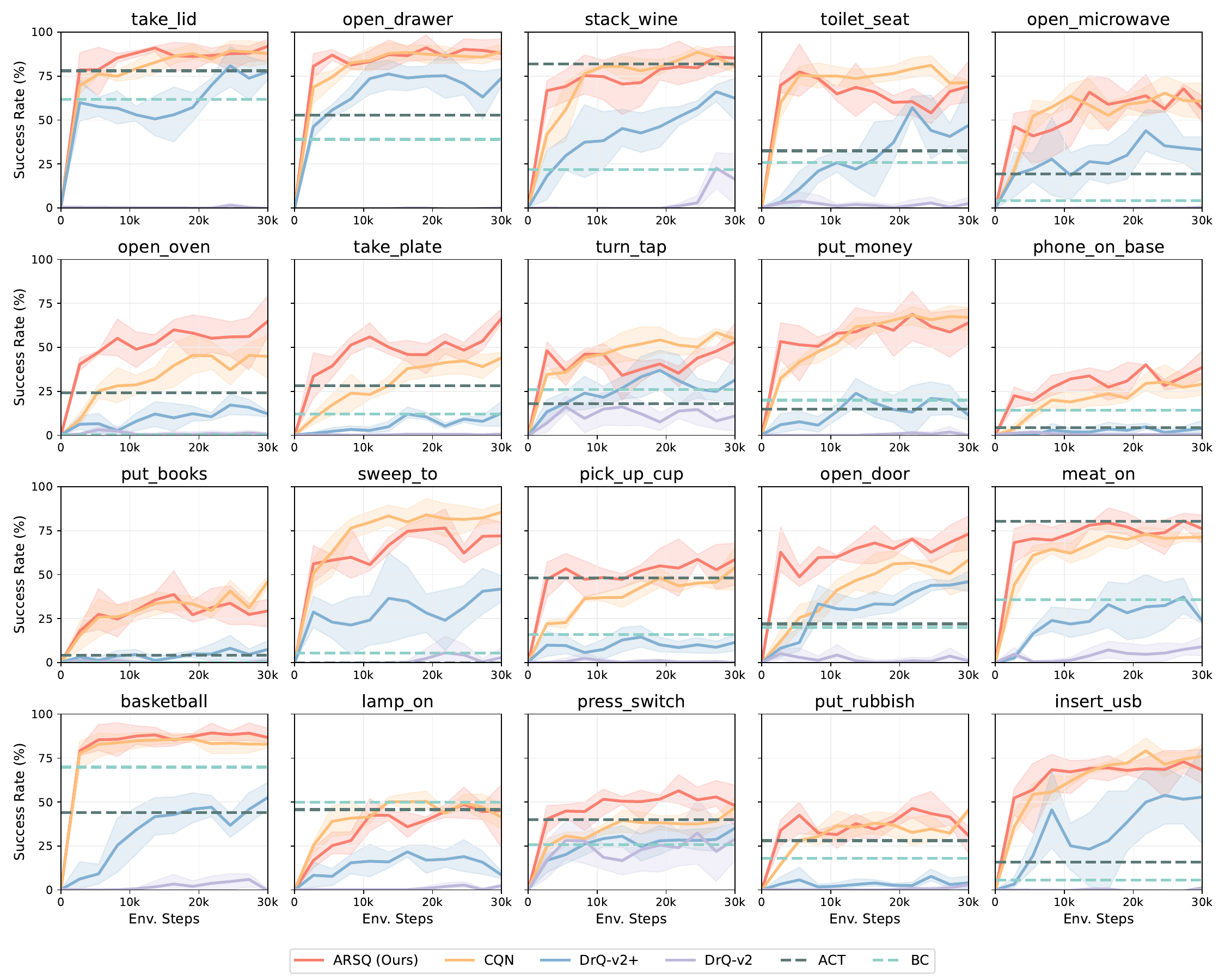}
    \caption{RLBench results in all 20 tasks.}
    \label{fig:exp-rlb-task-all}
\end{figure}

\section{Computational Cost Analysis}
As discussed in Sec.~\ref{sec:med-alg}, ARSQ generates actions in each dimension in an auto-regressive manner. 
To analyze the overhead, we conducted experiments on both D4RL (\textit{hopper-medium}) and RLBench (\textit{Open Oven}) tasks. The training and inference times for ARSQ and CQN were evaluated 1,000 times and averaged. 
These experiments were conducted on a single Nvidia RTX 3090 graphics card.

The results are shown in Fig.~\ref{fig:exp-abl-compute}. 
ARSQ exhibits similar training times to CQN, due to the parallel optimization implemented and the batch training nature of the auto-regressive model. 
However, ARSQ experiences higher inference latency compared to CQN. 
We aim to address this issue by grouping the action dimensions and outputting the grouped dimensional actions auto-regressively, a solution we plan to explore in future work.

\begin{table}[h]
    \centering
    \caption{Computational time in D4RL and RLBench (ms). }
    \begin{tabular}{c|cc}
        \toprule
         & D4RL & RLBench \\
        \midrule
        ARSQ Inference & 4.1 & 32.1 \\
        ARSQ Training & 12.2 & 290.5  \\
        CQN Inference & 2.6 & 6.9 \\
        CQN Training & 11.6 & 260.5 \\
        \bottomrule
    \end{tabular}
    \label{fig:exp-abl-compute}
\end{table}

\section{Performance under Fully Online Setting}

In addition to the main experimental results presented in Sec.~\ref{sec:exp-d4rl}, we assess the performance of ARSQ in a fully online setting. 
We compare the fully online reinforcement learning performance of ARSQ and CQN on the \textit{hopper} task, using PPO\cite{PPO} as a baseline for comparison. 
The results are depicted in Fig.~\ref{fig:exp-d4rl-online}, with all experiments conducted using three random seeds. 
We also illustrate the performance of both vanilla and offline ARSQ using the one of the \textit{hopper} dataset, specifically \textit{hopper-medium-replay}.

Online ARSQ achieves a similar converged performance to vanilla ARSQ, albeit requiring more environment steps. 
This highlights the importance of using offline datasets to enhance sample efficiency. 
Furthermore, ARSQ with online interaction achieves a higher final performance than in the offline setting, suggesting the necessity of online interaction to enhance policy performance. 
Additionally, online ARSQ demonstrates greater sample efficiency than CQN and PPO, underscoring its potential as a versatile reinforcement learning algorithm.

\begin{figure}[h]
    \centering
    \includegraphics[width=0.5\linewidth]{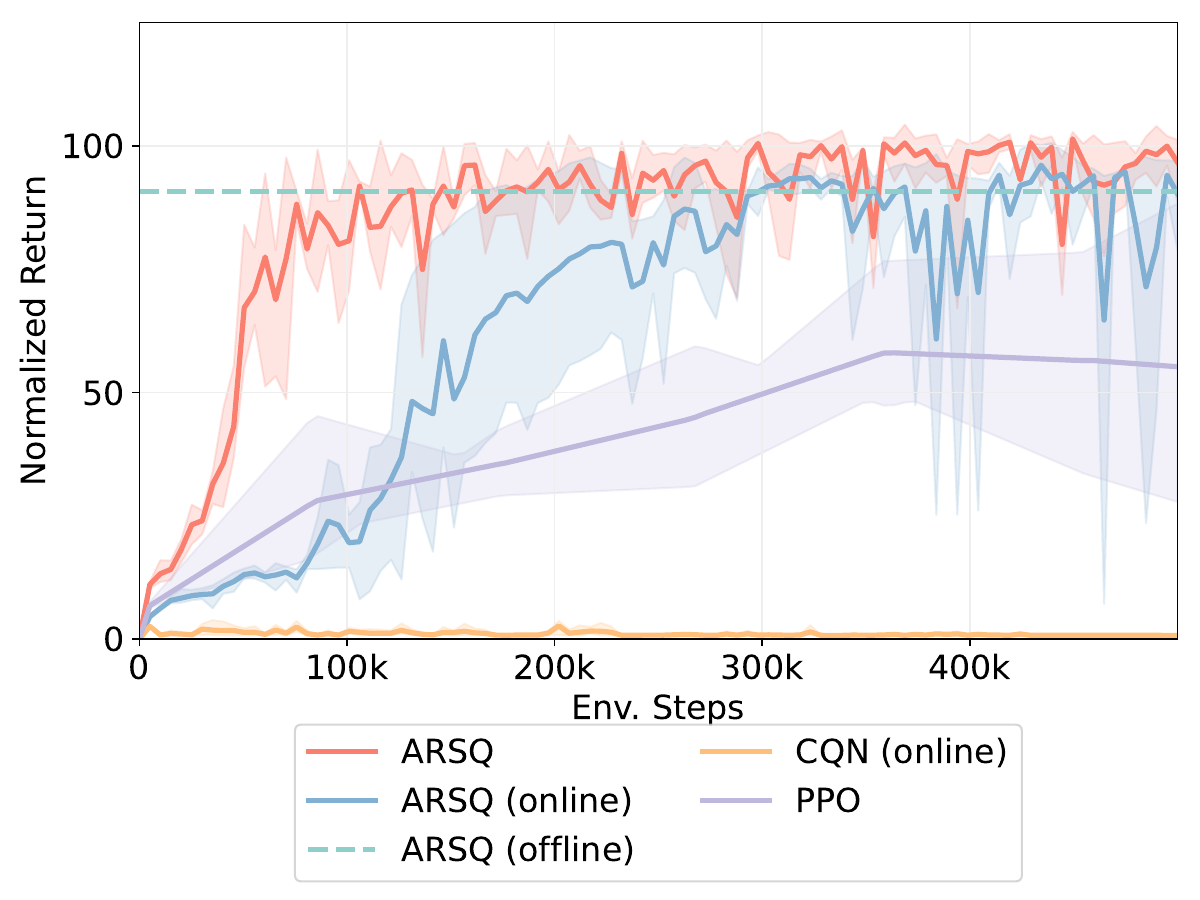}
    \caption{Performance under fully online settings, on \textit{hopper} task or \textit{hopper-medium-replay} dataset.}
    \label{fig:exp-d4rl-online}
\end{figure}

\section{Error Analysis of Q Prediction and Action Discretization}

To further examine the error introduced by action discretization, as discussed in Sec.~\ref{sec:med}, we designe a more complex case study similar to Fig.~\ref{fig:intro-case}. 
We create a one-step environment featuring a two-dimensional action space $(a_1,a_2) \in \mathcal{A} = [-1, 1]^2 \subset \mathbb{R}^2$.
The agent performs an action at the initial timestamp, receives a reward, and the episode concludes. 
The Q function's ground truth landscape, akin to the reward landscape, is illustrated in Fig.~\ref{fig:app-case3d-env}. 
There is one optimal action mode, two sub-optimal action modes, and two negative action modes.

We uniformly sample 2,000 data points from the environment to form a dataset. 
This dataset is then used to train agents with independent Q decomposition for each action dimension, ARSQ without hierarchical coarse-to-fine action discretization, and the standard ARSQ. 
The resulting Q landscapes are displayed in Fig.~\ref{fig:app-case3d}, and Q prediction errors are displayed in Fig.~\ref{fig:app-case3d-err}.
When independently decomposing Q for each action dimension, the agent learns a blurred Q landscape, complicating the identification of optimal actions. 
ARSQ without coarse-to-fine action discretization produces a Q landscape similar to the vanilla ARSQ but with more "glitches," likely because too many action bins make it difficult for the dataset to cover them comprehensively. 
This underscores the importance of coarse-to-fine action discretization.

Furthermore, we sample 1,000 data points in the proposed environment and calculate the Q prediction error against the ground truth for all three methods discussed, over three random seeds.
The results are presented in Tab.~\ref{tab:exp-case3d-error}. 
Independent Q decomposition results in a significant error increase compared to ARSQ and continuous Q learning. 
Additionally, ARSQ without coarse-to-fine action discretization also results in higher Q error, further highlighting the necessity of our action discretization strategy.

\begin{figure*}[ht]
    \centering
    \begin{subfigure}[t]{0.22\textwidth}
        \centering
        \includegraphics[width=\textwidth]{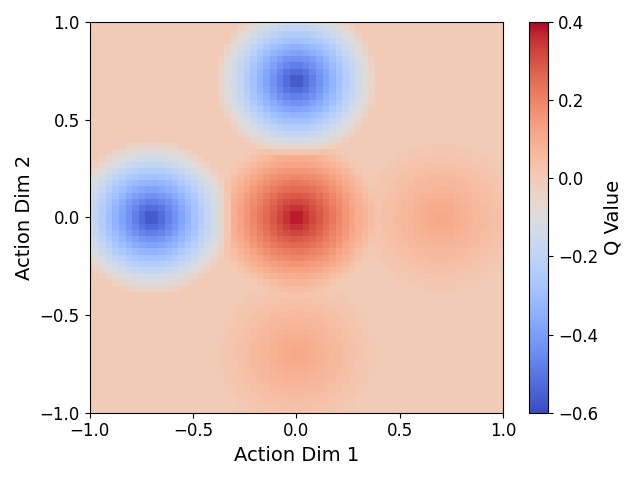}
        \caption{Ground truth.}
        \label{fig:app-case3d-env}
    \end{subfigure}
    \hspace{0.01\textwidth}
    \begin{subfigure}[t]{0.22\textwidth}
        \centering
        \includegraphics[width=\textwidth]{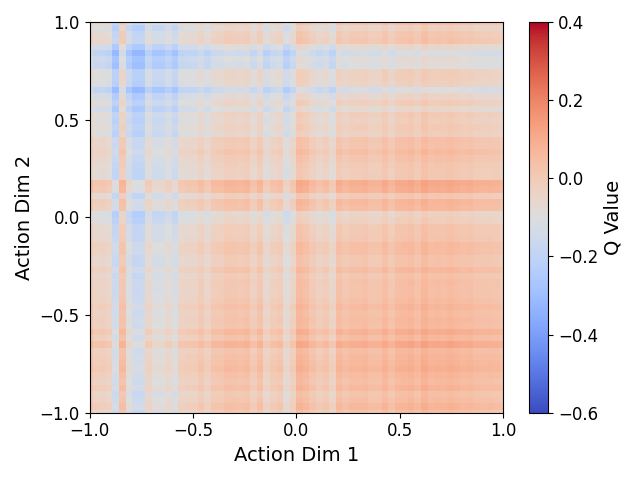}
        \caption{Independent action decomposition.}
    \end{subfigure}
    \hspace{0.01\textwidth}
    \begin{subfigure}[t]{0.22\textwidth}
        \centering
        \includegraphics[width=\textwidth]{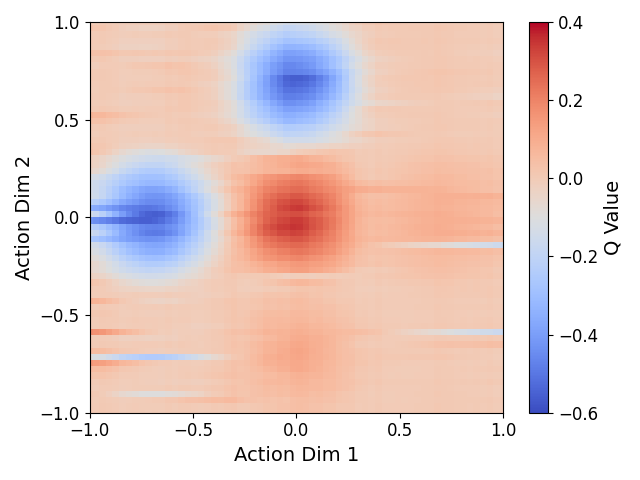}
        \caption{ARSQ w/o coarse-to-fine discretization.}
    \end{subfigure}
    \hspace{0.01\textwidth}
    \begin{subfigure}[t]{0.22\textwidth}
        \centering
        \includegraphics[width=\textwidth]{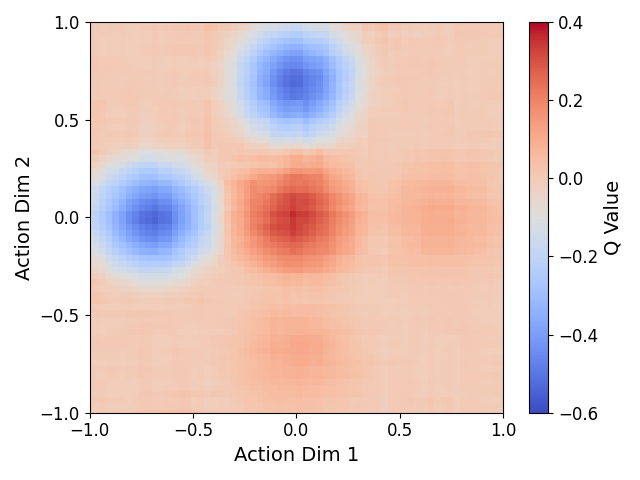}
        \caption{ARSQ.}
    \end{subfigure}
    \caption{Visualization of Q prediction with different action discretization strategy.}
    \label{fig:app-case3d}
\end{figure*}

\begin{figure*}[ht]
\centering
    \begin{subfigure}[t]{0.22\textwidth}
        \centering
        \includegraphics[width=\textwidth]{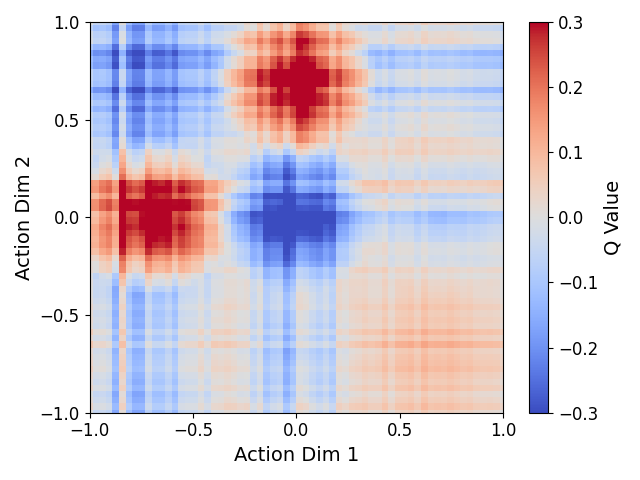}
        \caption{Independent action decomposition.}
    \end{subfigure}
    \hspace{0.01\textwidth}
    \begin{subfigure}[t]{0.22\textwidth}
        \centering
        \includegraphics[width=\textwidth]{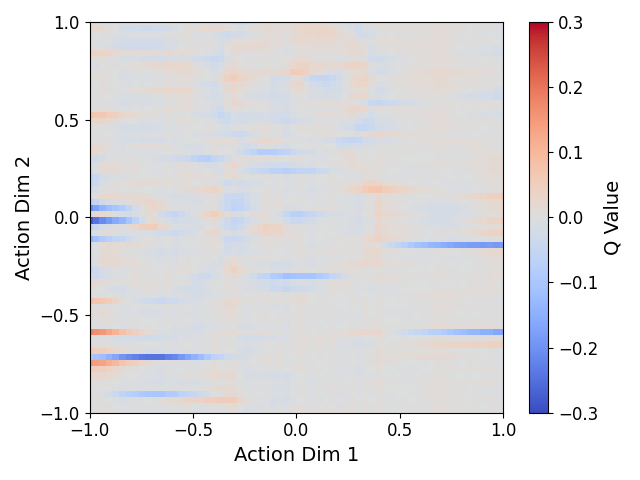}
        \caption{ARSQ w/o coarse-to-fine discretization.}
    \end{subfigure}
    \hspace{0.01\textwidth}
    \begin{subfigure}[t]{0.22\textwidth}
        \centering
        \includegraphics[width=\textwidth]{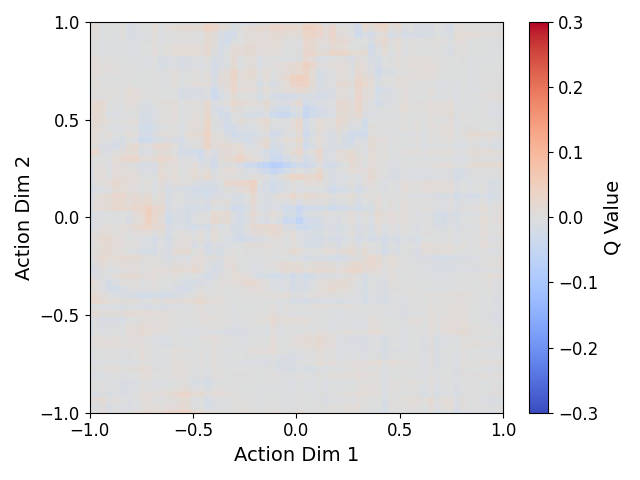}
        \caption{ARSQ.}
    \end{subfigure}
    \caption{Q prediction errors with different action discretization strategy.}
    \label{fig:app-case3d-err}
\end{figure*}

\begin{table}[h]
\centering
\caption{Q prediction error with different action discretization strategy.}
\begin{tabular}{lc}
\toprule
Discretization Method & Error \\
\midrule
Independent Decomposition & 17.57 ± 0.67 \\
ARSQ w/o Coarse-to-fine & 0.50 ± 0.21 \\
ARSQ & 0.16 ± 0.02 \\
\bottomrule
\end{tabular}
\label{tab:exp-case3d-error}
\end{table}


\end{document}